\documentclass{article}
\usepackage[nohyperref, accepted]{icml2022}

\usepackage[utf8x]{inputenc}

\usepackage{algorithm}
\usepackage{algorithmic}
\usepackage[colorlinks=true, linkcolor=red, urlcolor=cyan, citecolor=cyan]{hyperref}
\usepackage{url}
\usepackage{booktabs}
\usepackage{color}
\usepackage{placeins}
\usepackage{amsmath}
\usepackage{amsthm}
\usepackage{amssymb}
\usepackage{caption}
\usepackage{subcaption}
\usepackage{tikz}
\usepackage{comment}
\usepackage{bm}
\usepackage{array}
\usepackage{scalefnt}
\usepackage{mathtools}
\usepackage{footnote}
\usepackage{amsmath}
\usepackage{amssymb}
\usepackage{graphicx}
\usepackage[colorinlistoftodos]{todonotes}
\usepackage{thm-restate}
\usepackage{stmaryrd} 
\usepackage{dsfont}
\usepackage{array}

\newcommand{\finterval}{{\Gamma_{\mathrm{interval}}}}

\newcommand{\confunc}{\phi}

\newcommand{\metricfont}{\texttt}

\usepackage{enumitem}

\newcommand{\Fc}{{\mathcal{F}}}

\newcommand{\Qc}{{\mathcal{Q}}}
\newcommand{\Rc}{{\mathcal{R}}}

\newcommand{\Xc}{{\mathcal{X}}}
\newcommand{\Yc}{{\mathcal{Y}}}

\newcommand{\Eb}{{\mathbb{E}}}

\newcommand{\Ebb}{{\mathbb{E}}}
\newcommand{\Fbb}{{\mathbb{F}}}

\newcommand{\Rbb}{{\mathbb{R}}}

\newtheorem{prop}{Proposition}

\newtheorem{definition}{Definition}

\newcommand\numberthis{\addtocounter{equation}{1}\tag{\theequation}}

\newcommand{\calfunc}{\varphi}
\newcommand{\calscore}{S}
\newcommand{\intalg}{\psi}
\newcommand{\intfunc}{q}
\newcommand{\intfuncset}{\Qc}

\newtheorem{example}{Example}
\newtheorem{observation}{Observation}

\icmltitlerunning{Modular Conformal Calibration}

\begin{document}

\twocolumn[

\icmltitle{Modular Conformal Calibration}

\icmlsetsymbol{equal}{*}

\begin{icmlauthorlist}
\icmlauthor{Charles Marx}{equal,stanford}
\icmlauthor{Shengjia Zhao}{equal,stanford}
\icmlauthor{Willie Neiswanger}{stanford}
\icmlauthor{Stefano Ermon}{stanford}
\end{icmlauthorlist}

\icmlaffiliation{stanford}{Computer Science Department, Stanford University}

\icmlcorrespondingauthor{Charles Marx}{ctmarx@stanford.edu}

\icmlkeywords{Machine Learning, ICML}

\vskip 0.3in
]

\printAffiliationsAndNotice{\icmlEqualContribution} %

\begin{abstract}
Uncertainty estimates must be calibrated (i.e., accurate) and sharp (i.e., informative) in order to be useful. This has motivated a variety of methods for {\em recalibration}, which use held-out data to turn an uncalibrated model into a calibrated model. However, the applicability of existing methods is limited due to their assumption that the original model is also a probabilistic model.
We introduce a versatile class of algorithms for recalibration in regression that we call \emph{modular conformal calibration} (MCC). This framework allows one to transform any regression model into a calibrated probabilistic model. The modular design of MCC allows us to make simple adjustments to existing algorithms that enable well-behaved distribution predictions. We also provide finite-sample calibration guarantees for MCC algorithms. Our framework recovers isotonic recalibration, conformal calibration, and conformal interval prediction, implying that our theoretical results apply to those methods as well.
Finally, we conduct an empirical study of MCC on 17 regression datasets. Our results show that new algorithms designed in our framework achieve near-perfect calibration and improve sharpness relative to existing methods.
\end{abstract}

\section{Introduction} 
\renewcommand{\comment}[1]{}

Uncertainty estimates can inform human decisions \citep{pratt1995introduction, berger2013statistical}, flag when an automated decision system requires human review \citep{kang2021statistical}, and serve as an internal component of automated systems. For example, uncertainty informs treatment decisions in medicine \citep{begoli2019need} and supports safety in autonomous navigation \citep{michelmore2018evaluating}. 
In such settings, the benefits of accounting for uncertainty hinge on our ability to produce \emph{calibrated} uncertainty estimates---e.g., of those events to which one assigns a probability of 90\%, the events should indeed occur 90\% of the time. A model that is not calibrated can consistently make confident predictions that are incorrect. 

Many models, such as neural networks \citep{guo2017calibration} and Gaussian processes \citep{rasmussen2003gaussian, tran2019calibrating}, achieve high accuracy but have poorly calibrated or absent uncertainty estimates. In other cases, a pretrained model is released for wide use and it is difficult to guarantee that it will produce calibrated uncertainty estimates in new settings \citep{zhao2021calibrating}. This leads us to the question: \textit{how can we safely deploy models with high predictive value but poor or absent uncertainty estimates?}

These challenges have motivated work on \emph{recalibration}, whereby a model with poor uncertainty estimates is transformed into a probabilistic model that outputs calibrated probabilities \citep{kuleshov2018accurate, vovk2020conformal, niculescu2005predicting, chung2021uncertainty}. Recalibration methods are attractive because they require only black-box access to a given model and can return well-calibrated probabilistic predictions. 

However, calibration is not the only goal of probabilistic models. 
It is also important for a probabilistic model to predict sharp (i.e., low variance) distributions to convey more information. Furthermore, recalibration methods need to be data efficient to calibrate models in data poor regimes.

In this paper, we introduce \emph{modular conformal calibration} (MCC), a class of algorithms that unifies existing recalibration methods and gives well-behaved distribution predictions from any model. Our main contributions are:

\begin{enumerate}[parsep=0pt, topsep=-3pt, itemsep=5pt, leftmargin=5mm]
    \item We introduce modular conformal calibration, a class of algorithms for recalibration in regression, which can be applied to recalibrate almost any regression model. MCC unifies isotonic calibration \citep{kuleshov2018accurate}, conformal calibration \citep{vovk2020conformal}, and conformal interval prediction \citep{vovk2005algorithmic} under a single theoretical framework, and additionally leads to new algorithms.
    \item We provide finite-sample calibration guarantees, showing that MCC can achieve $\epsilon$ calibration error with $O(1/\epsilon)$ samples. These results also apply to the aforementioned recalibration methods that MCC unifies. 
    \item We conduct an empirical study on 17 datasets to compare the performance of recalibration methods in practice. We find that new algorithms within our framework outperform existing methods in terms of both sharpness and proper scoring rules.
\end{enumerate}

\section{Background}

\begin{figure*}
    \centering
    \includegraphics[width=0.9\linewidth]{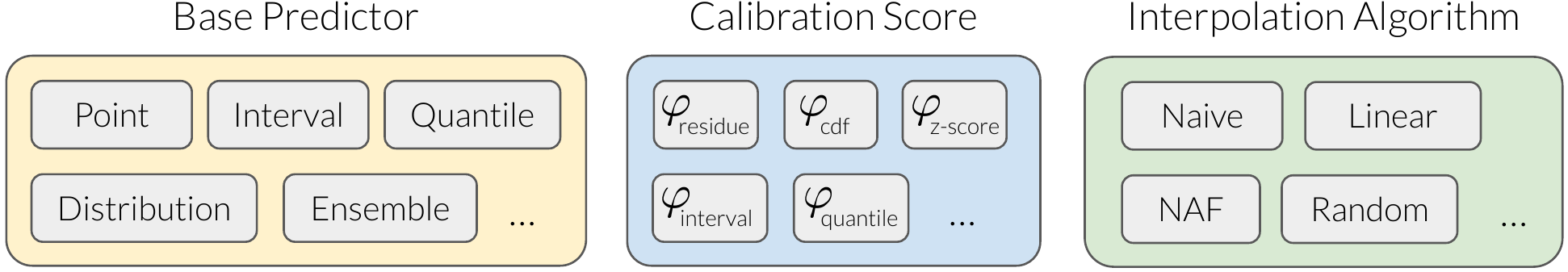}
    \caption{A combination of the three components defines a modular conformal calibration algorithm. %
    }
    \label{fig:design_choices}
\end{figure*}

\newcommand{\iid}{i.i.d.\,}
\newcommand{\indic}{\mathds{1}}
\newcommand{\indicarg}[1]{\mathds{1} \left\{ #1 \right\}}
\newcommand{\uniform}{\mathrm{Uniform}}
\newcommand{\normal}{\mathrm{Normal}}
\newcommand{\truedist}{\Fbb}

Given an input feature vector $x \in \Xc$ (e.g., a satellite image), we want to predict a label $y \in \Yc$ (e.g., the temperature tomorrow). We consider regression problems where $\Yc=\Rbb$. We assume there is a true distribution $\truedist_{XY}$ over $\Xc \times \Yc$, and we have access to $n$ \iid examples $(X_i, Y_i) \sim \truedist_{XY}$. 

Given a feature vector $x$, our goal is to predict the conditional distribution of $Y$ given $X=x$, denoted $\truedist_{Y\mid x}$. A \emph{distribution predictor} is a function $H: \Xc \to \Fc(\Yc)$ that takes a feature vector $x$ as input and returns $H[x]$, a  cumulative distribution function (CDF) over $\Yc$. Note that $H[x] \in \Fc(\Yc)$ is a function, intended to approximate $\truedist_{Y\mid x}$ by mapping any input $y \in \Yc$ to a value $H[x](y)$ in $[0, 1]$. 

We consider a two-stage process to learn a distribution predictor from data. 
In the first stage, we train the \emph{base predictor} $f: \Xc \to \Rc$, which maps a feature vector $x$ to a prediction in a space $\Rc$. The base predictor can be any model (e.g., neural network, support vector machine) and the prediction can be of any type; for example, $f$ could give a point prediction for the mean of $\truedist_{Y \mid x}$ (i.e., $\Rc=\Rbb$) or $f$ could give an interval prediction that is likely to contain $Y$ (i.e., $\Rc=\Rbb^2$). Alternatively, $f$ could predict a Gaussian distribution that approximates the distribution of the label (i.e., $\Rc=\Fc(\Yc)$). 

Regardless of the base predictor we choose, the second step is to \emph{recalibrate} the base predictor: we translate the base predictor $f$ into a calibrated distribution predictor $H$. We construct $H$ by fitting a wrapper function around $f$, meaning that $H[x]$ only depends on $x$ via the prediction $f(x)$. 

We focus on the second stage, that of recalibration. 
For those interested in which base predictors yield the best recalibrated predictors, see Section \ref{sec:experiment} for an empirical study.

\begin{example}[Linear Regression]
\label{ex::prediction}
\emph{Consider a linear regression problem where $Y_i=\beta^\top X_i + \epsilon_i$ for $X_i\in \Rbb^d$, $\beta \in \Rbb^d$, and $\epsilon_i \in \Rbb$ distributed \iid with known CDF $\truedist_\epsilon$. Imagine we are given a base predictor $f(x)=\beta^T x$ that perfectly predicts the mean of $\truedist_{Y \mid x}$. Then we can construct a perfect distribution predictor $H[x]: \Yc \rightarrow [0, 1]$ by defining:
\begin{align*}
    H[x](y) := \truedist_\epsilon(y - f(x)) = \truedist_{Y \mid x}(y)
\end{align*}
Note that the CDF prediction $H[x]$ only depends on $x$ through the point prediction $f(x)$, but still gives perfect distribution predictions. 
}\end{example}

\paragraph{Calibration} Optimally, the distribution predictor $H$ will output for each value $x$ the true conditional CDF, $H[x]=\truedist_{Y\mid x}$, as in Example \ref{ex::prediction}. However, many feature vectors $x$ only appear once in our data, making it impossible to learn a perfect distribution predictor $H$ from data without additional assumptions (such as the assumptions of linearity and \iid noise in Example \ref{ex::prediction}). Instead of making additional assumptions, we instead aim for \emph{calibration}, a weaker property than perfect distribution prediction that can be obtained in practice. 

Recall that for any random variable $Y$, the probability integral transform $\truedist_Y(Y)$ obtained by evaluating the CDF with random input $Y$ follows a standard uniform distribution. We should expect the same behavior from our predicted CDFs; we should observe that $H[X](Y)$ also follows a standard uniform distribution. For example, the observed label should be greater than the predicted 95th percentile for approximately 5\% of examples.

\begin{definition}
Given a distribution predictor $H: \Xc \to \Fc(\Yc)$, we say that $H$ is \emph{calibrated} if $H[X](Y)$ follows a standard uniform distribution. Formally, $H$ is calibrated if
\begin{equation}
    \Pr(H[X](Y) \leq p) = p, \quad \text{for all } p \in [0, 1]
    \label{eq::calibration}
\end{equation}
Similarly, for a value $\epsilon \geq 0$, we say that $H$ is \emph{$\epsilon$-calibrated} if Equation \eqref{eq::calibration} is only violated by at most $\epsilon$:
\begin{equation}
    \Pr(H[X](Y) \leq p) \in p \pm \epsilon, \quad \textrm{for all } p \in [0, 1]
    \label{eq::epscalibration}
\end{equation}
 \end{definition}

Calibration is a necessary but not sufficient condition for making good distribution predictions. 
Note that a distribution predictor $H[x]=\truedist_Y$ that ignores $x$ and returns the marginal cdf for $Y$ will be calibrated, but not useful. Thus, distribution predictions should be as \emph{sharp} (i.e., highly concentrated) as possible, conditioned on being calibrated.

\paragraph{Related Work} Post-hoc uncertainty quantification is an active field of research. 
Platt scaling \cite{platt1999probabilistic} and isotonic regression \cite{niculescu2005predicting} are popular methods for recalibrating binary classifiers. Platt scaling fits a logistic regression model to the scores given by a model, and isotonic regression learns a nondecreasing map from scores to the unit interval. Quantile regression \citep[e.g.,]{romano2019conformalized, chung2020beyond} simultaneously estimates multiple quantiles of the label distribution, often via the pinball loss, which can then be combined to construct calibrated distribution predictions.

Isotonic calibration \citep{kuleshov2018accurate} is an effective strategy for recalibrating a base predictor that already makes distribution predictions. Isotonic calibration computes the empirical quantiles (i.e., $f[X_i](Y_i)$) of a distribution predictor on the calibration dataset, and uses isotonic regression to adjust the empirical quantiles so that they are uniform on $[0, 1]$. 
Conformal calibration \citep{vovk2020conformal} is similar to isotonic calibration, except it uses a randomized function to adjust the empirical quantiles instead of isotonic regression. This yields strong calibration guarantees, at the cost of discontinuous and randomized distribution predictions.

Conformal prediction \citep{vovk2005algorithmic} is a general approach to uncertainty quantification that produces prediction sets (i.e., interval predictions) with guaranteed coverage, instead of distribution predictions.
In the context of these prediction sets, some prior work has also studied the connection between conformal prediction and calibration \citep{lei2018distribution, gupta2020distribution, angelopoulos2021gentle}.
Our work builds on isotonic calibration, conformal calibration, and conformal prediction to construct novel recalibration algorithms for arbitrary base predictors.

\section{Modular Conformal Calibration}
\label{sec:modular}

In this section, we introduce a new class of recalibration procedures and provide calibration guarantees for this class of algorithms. We begin with a simple example in which we recalibrate a point predictor, then we generalize this reasoning to introduce modular conformal calibration. We conclude this section by enumerating design choices our framework introduces.

\subsection{Warm-up} 
\label{sec:simple_example}

We start with a simple example to introduce the main idea. In this example, we turn a point predictor into a calibrated distribution predictor. 

\begin{enumerate}[parsep=0pt, topsep=-3pt, itemsep=5pt, leftmargin=5mm]

\item Suppose we have a base predictor $f: \Xc \to \Rbb$ that uses a satellite image $X$ to produce a point estimate for the temperature the following day $Y$. Additionally, we are given a dataset $(X_1, Y_1), \dots, (X_n, Y_n)$ where we observe both the satellite image and temperature. Now, given a new satellite image $X_*$, we want to predict the (unobserved) temperature $Y_*$. It is important for us to quantify our uncertainty if this prediction will inform a decision---we will likely behave differently if the temperature is certain to be within 2 degrees of our estimate, versus if it could differ by 20 degrees.

\item We can apply the residue function $\calfunc(f(x), y)=y - f(x)$ to our predictions, giving $\calscore_1 = Y_1 - f(X_1), \cdots, \calscore_n = Y_n - f(X_n)$; if we knew the label $Y_*$, we could also apply the residue score to our test example to compute a residue $S_*=Y_* - f(X_*)$. If the data is i.i.d. then the residues are also i.i.d. random variables. %

\item We can consider how large or small $S_*$ is among the set of residues $\lbrace \calscore_1, \cdots, \calscore_n \rbrace$. Intuitively, because of the i.i.d. assumptions, $S_*$ is equally likely to be the smallest, 2nd smallest, ..., largest element. %
Formally, if we define the ranking function $q$ as:
\begin{align*}
    q(t) := \frac{1}{n} \sum_{i=1}^n \indic\lbrace t \leq \calscore_i \rbrace\
\end{align*}
then up to discretization error 
\begin{align*}
    \Pr[ q(S_*) \leq c] \approx c, \forall c \in [0, 1] \numberthis\label{eq:calibration_eq} 
\end{align*}

\end{enumerate}
In fact, Eq.(\ref{eq:calibration_eq}) is exactly the definition of calibration so the reasoning above proves that the CDF predictor $H[x](y)=q(\calfunc(f(x), y))$ is approximately calibrated. We also have to show that $H[x]$ is a CDF. This is easy to prove, as $q(\calfunc(f(x), y))$ is a nondecreasing function in $y$. We conclude that $H$ is an approximately calibrated CDF predictor.

\subsection{Components of a Recalibration Algorithm}

\begin{algorithm}[tb]
\caption{Modular Conformal Calibration} 
  \label{alg:general}
\begin{algorithmic}
\STATE {\bfseries Input:} base predictor $f: \Xc \to \Rc$, calibration score $\calfunc: \Rc \times \Yc \to \Rbb$ and interpolation algorithm $\intalg$
\STATE {\bfseries Input:} calibration dataset $(X_1, Y_1), \cdots, (X_n, Y_n)$ %
\STATE Compute calibration scores $\calscore_i = \calfunc(f(X_i),Y_i)$ for $i=1, \dots, n$
\STATE Run the interpolation algorithm $\intfunc = \intalg\left(\calscore_{1}, \cdots, \calscore_{n}\right) $
\STATE {\bfseries Return:} the CDF predictor $H[x](y) = \intfunc(\calfunc(f(x), y)$
\end{algorithmic}
\end{algorithm}

We organize the design choices within modular conformal calibration into three decisions: 

\begin{enumerate}[parsep=0pt, topsep=-3pt, itemsep=5pt, leftmargin=5mm]
    \item \textbf{Base predictor.} In the first step, we choose a base predictor. This can be any prediction function $f: \Xc \to \Rc$. There are no restrictions on the prediction space $\Rc$, as long as we can define a compatible calibration score. The only requirement is that $f$ is not learned on the calibration dataset $(X_1,Y_1), \cdots, (X_n, Y_n)$, but it could be learned on any different dataset. In the previous example, the base predictor is a point prediction function. 
    \item \textbf{Calibration score.} In the second step, we choose a calibration score, which is any function $\calfunc: \Rc \times \Yc \to \Rbb$ that is monotonically strictly increasing in $y$. In the previous example, the calibration score is the residue $\calfunc(f(x), y) = y - f(x)$. Intuitively, the calibration score should reflect how large $y$ is relative to our prediction $f(x)$. We can then compute the calibration score for each sample in the training set $\calscore_i = \calfunc(X_i, Y_i); i=1, \cdots, n$. For convenience of computing rankings, we sort the scores into $\calscore_{(1)} \leq \calscore_{(2)} \leq \cdots \leq \calscore_{(n)}$. 
    \item \textbf{Interpolation algorithm.}  Finally we need a map from the calibration score to the final CDF output. In example 1 we constructed an interpolation map (the function $q$) by mapping any score in $(\calscore_{(i-1)}, \calscore_{(i)}]$ to $i/n$. The interpolation algorithm we use here is a very simple step function. However, the resulting CDFs are not continuous which may be inconvenient (e.g., if we want to compute the log likelihood). 
\end{enumerate}

More generally we can use any interpolation algorithm: let $\intfuncset$ be the set of monotonically non-decreasing functions $\Rbb \to [0, 1]$. An interpolation algorithm is a map $\intalg: \Rbb^n  \to \intfuncset$. An interpolation algorithm maps the calibration scores $S_{1}, \dots, S_{n}$ to a function $\intfunc$ such that $\intfunc(S_{1}), \dots, \intfunc(S_{n})$ are approximately evenly spaced on the unit interval. 

\begin{definition}
\label{def:accurate}
An interpolation function $\intalg: \Rbb^n \to \intfuncset$ is \emph{$\lambda$-accurate} if for any distinct inputs $(u_1, u_2, \dots, u_n) \in \Rbb^n$ the function $\intfunc = \intalg(u_1, u_2, \cdots, u_n)$ maps the $i$-th smallest input $u_{(i)}$ close to $i/(n + 1)$:
\begin{equation}
\intfunc\left(u_{(i)}\right) \in  \frac{i \pm \lambda}{n+1}, \quad \text{for all }i = 1, \dots, n
\end{equation}
 If $\intalg$ is a randomized function, then the statement is quantified by almost surely. 
\end{definition}

If a $\lambda$-accurate interpolation algorithm is applied to calibration scores computed on a held-out dataset, the function $\intfunc \circ \calfunc$ will be approximately calibrated on that dataset. 
We can write the full process for making a CDF prediction as:
\begin{equation}
H[x](y) = \underbrace{\intfunc(\calfunc(f(x), y))}_{\text{prediction for} \Pr(Y \leq y \mid X = x)}
\end{equation}
This three step process of applying a base predictor $f$, calibration score $\calfunc$, and interpolation function $\intfunc$ (learned by an interpolation algorithm $\intalg$) is detailed in Algorithm~\ref{alg:general} and illustrated in Figure~\ref{fig:design_choices}. Now, we formalize the intuition that $H$ will be calibrated into a formal guarantee.

\begin{restatable}{theorem}{thmcalibration}
\label{thm:calibration}
For any base predictor $f$, calibration score $\calfunc$, and $\lambda$-accurate interpolation algorithm $\intalg$ such that the random variable $\calfunc(f(X), Y)$ is absolutely continuous, Algorithm~\ref{alg:general} is $\frac{1+\lambda}{n+1}$-calibrated. %
\end{restatable}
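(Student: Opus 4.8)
The plan is to reduce the calibration statement to a statement about the \emph{rank} of the test score among the calibration scores. I would introduce a fresh test pair $(X,Y)\sim\truedist_{XY}$ drawn independently of the calibration data and set $\calscore_* = \calfunc(f(X),Y)$, so that the quantity of interest is $H[X](Y) = \intfunc(\calscore_*)$, where $\intfunc = \intalg(\calscore_1,\dots,\calscore_n)$. Since the $n+1$ pairs are i.i.d.\ and $\calfunc(f(X),Y)$ is absolutely continuous, the scores $\calscore_1,\dots,\calscore_n,\calscore_*$ are i.i.d.\ and almost surely distinct. In particular $\intalg$ receives distinct inputs almost surely, so the $\lambda$-accuracy bound $\intfunc(\calscore_{(i)}) \in \frac{i\pm\lambda}{n+1}$ applies to the order statistics $\calscore_{(1)}<\cdots<\calscore_{(n)}$ of the calibration scores.

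The key observation is that the rank $R\in\{1,\dots,n+1\}$ of $\calscore_*$ among all $n+1$ scores is uniformly distributed by exchangeability; this remains true when $\intalg$ is randomized, since its internal randomness is independent of the data and does not affect $R$. Conditioning on $\{R=k\}$ pins $\calscore_*$ between consecutive calibration order statistics, $\calscore_{(k-1)} < \calscore_* < \calscore_{(k)}$ (with conventions $\calscore_{(0)}=-\infty$, $\calscore_{(n+1)}=+\infty$). Because $\intfunc$ is non-decreasing and takes values in $[0,1]$, I can sandwich $\intfunc(\calscore_{(k-1)}) \le \intfunc(\calscore_*) \le \intfunc(\calscore_{(k)})$ and substitute the $\lambda$-accuracy bounds (using $\intfunc\ge 0$ and $\intfunc\le 1$ at the two boundary ranks) to obtain, for every $k$,
\[
\frac{(k-1)-\lambda}{n+1} \;\le\; \intfunc(\calscore_*) \;\le\; \frac{k+\lambda}{n+1}.
\]

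To finish, I would fix $p\in[0,1]$. The upper bound guarantees $\intfunc(\calscore_*)\le p$ whenever $k \le p(n+1)-\lambda$, and the lower bound guarantees $\intfunc(\calscore_*) > p$ whenever $k-1-\lambda > p(n+1)$. Summing $\Pr(R=k)=1/(n+1)$ over these two ranges (and counting integers crudely via floors) yields
\[
p - \frac{1+\lambda}{n+1} \;\le\; \Pr\!\left(\intfunc(\calscore_*)\le p\right) \;\le\; p + \frac{1+\lambda}{n+1},
\]
which is exactly $\frac{1+\lambda}{n+1}$-calibration. The main obstacle is the middle step: $\lambda$-accuracy only controls $\intfunc$ at the \emph{training} scores, so to pin down $\intfunc$ at the independent point $\calscore_*$ I must lean on monotonicity of $\intfunc$ to sandwich its value and on exchangeability to make the rank uniform. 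Extra care is needed at the extreme ranks $k=1$ and $k=n+1$, where $\calscore_*$ falls outside the range of the calibration scores and the sandwich must instead invoke the trivial bounds $0\le\intfunc\le 1$.
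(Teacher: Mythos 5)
Your proof is correct and takes essentially the same route as the paper's: both arguments combine almost-sure distinctness of the scores (from absolute continuity), the $\lambda$-accuracy bounds at the calibration order statistics, monotonicity of $\intfunc$, and exchangeability of the $n+1$ scores to obtain the two-sided $\frac{1+\lambda}{n+1}$ bound. The only difference is bookkeeping: you partition on the uniform rank of the test score and sandwich $\intfunc(\calscore_*)$ on each rank event, whereas the paper converts the threshold events $\{\intfunc(\calscore_*) \leq c\}$ into comparisons with specific order statistics and then applies the tower/symmetry argument---these are equivalent organizations of the same idea.
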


See Appendix \ref{app:proofs} for a proof of Theorem \ref{thm:calibration}. Similar to conformal interval prediction, there is a rather mild regularity assumption: $\calfunc(f(X), Y)$ has to be absolutely continuous, i.e. two i.i.d. samples $(X_1, Y_1)$ and $(X_2, Y_2)$ almost never have the same 
score $\calfunc(f(X_1), Y_1) \neq \calfunc(f(X_2), Y_2)$. In our warm-up example in Section \ref{sec:simple_example}, this condition requires that two samples $(X_1, Y_1)$, $(X_2, Y_2)$ almost never have exactly the same residue $Y_1 - f(X_1)  \neq Y_2 - f(X_2)$.

\section{Choosing a Recalibration Algorithm}

In this section, we describe natural choices for the calibration score and interpolation algorithm, given different base predictors. A main motivation for introducing the modular conformal calibration framework is to make it easy to develop new recalibration procedures.
Any pairing of the calibration scores and interpolation algorithms described in this section results in a recalibration algorithm with the finite-sample calibration guarantee given by Theorem \ref{thm:calibration}.

\subsection{The Base Predictor}

In some cases the base predictor will be fixed, such as when fine-tuning a pretrained model to be calibrated in a new setting. In other cases, we have end-to-end control of the  training process. In these cases, we must answer the question: \emph{Which base predictor should I train to get the best calibrated distribution predictor?}

An obvious choice is to learn a distribution predictor as the base predictor then recalibrate if needed. However, there is no guarantee that this will produce better results than learning a different type of base predictor (e.g., one of the prediction types in Table \ref{tab:prediction_types}) then recalibrating. 
In fact, in our experiments we find that even when learners are of similar power, distribution predictors are not necessarily the most effective choice of base predictor (see Section \ref{sec:experiment}).

\begin{table}[]
    \centering
    \begin{tabular}{p{0.55in}p{0.65in}p{1.65in}}
    \toprule
    Prediction Type & Output Space ($\Rc$) & Interpretation \\
    \midrule
      Point & $\Rbb$ & e.g. estimate of the mean. \\
      Interval & $ \Rbb^2$ & Interval is $[f_1(x), f_2(x)]$. \\
      Quantile & $ \Rbb^K$ & $f_k(x)$ predicts a quantile $\alpha_k \in (0, 1)$. \\
      Distribution & $\Fc(\Rbb)$ & $f[x]$ is a predicted CDF for $y$. \\
      Ensemble & $\Rc_1{\times} {\cdots}{\times}R_K$ & Each $f_k$ is the prediction of a model $k$. \\
     \bottomrule
    \end{tabular}
    \caption{A collection of common prediction types.}
    \label{tab:prediction_types}
\end{table}

\subsection{The calibration score} 
\label{sec:cal_scores}
In this section, we introduce calibration scores for a few prediction types (see Table \ref{tab:prediction_types}) to illustrate the role of the calibration score. Intuitively, a good calibration score should measure how large $y$ is relative to the prediction. Recall that the calibration score $\calfunc: \Rc \times \Yc \to \Rbb$ can be any function that is non-decreasing in $y$. A poor choice of calibration score still guarantees calibration (see Theorem \ref{thm:calibration}), but can harm other metrics such as sharpness or NLL. 
Additional calibration scores for quantile prediction and ensemble prediction can be found in Appendix \ref{app:calscores}.

\paragraph{Point Prediction} A natural calibration score for point predictors is the residue
$
    \calfunc_{\text{residue}}(x, y) = y - f(x)
$. 

\paragraph{Interval Prediction} For interval predictors, a natural choice for the calibration score is the residue divided by the interval size %
$
    \calfunc_\text{interval}(x, y) = (y - f_1(x))/(f_2(x) - f_1(x)) %
$. Intuitively, if $y$ equals the predicted upper bound $f_2(x)$, then the calibration score is $1$; if $y$ equals the lower bound $f_1(x)$ then the calibration score is $0$. The calibration scores of all other $y$ are linear interpolations of these two. 

\paragraph{Distribution Prediction} Given a distribution prediction, i.e. a map $f: \Xc \to \Fc(\Yc)$ two natural choices for the calibration score are
\begin{align*}
    \calfunc_\text{cdf}(x, y) &= f[x](y) \\
    \calfunc_\text{z-score}(x, y) &= (y - \mathrm{mean}(f(x)))/\mathrm{std}(f(x))
\end{align*}
Numerical stability is a practical issue for $\calfunc_\text{cdf}$. When $y$ is small or large, the calibration score may be the same for different $y$ due to rounding with finite numerical precision. 
Empirically, $\calfunc_\text{z-score}$ has better numerical stability and often better performance. %

\subsection{The Interpolation Algorithm} 
Lastly, we discuss the choice of interpolation algorithm. We illustrate a simple linear interpolation algorithm, a randomized interpolation algorithm with strong theoretical guarantees, and a more complex approach using neural autoregressive flows. Recall that an interpolation algorithm is a function $\intalg$ that maps a vector $(u_1, \dots, u_n)$ to a non-decreasing function $\intfunc$ such that $\intfunc(u_1), \dots, \intfunc(u_n)$ are approximately evenly spaced on the unit interval. Recall also that we write $u_{(i)}$ to denote the $i$-th smallest input.

\paragraph{Naive Discretization} As we discussed, the interpolation algorithm in Example 1 is 
\begin{align} 
\intfunc_\text{naive}(u) = i/n, \quad \text{if } u \in [u_{(i)}, u_{(i+1)})
\end{align} 
While simple, the resulting CDF is not continuous, making quantities such as the log-likelihood undefined. It is also not $0$-accurate (recall Definition \ref{def:accurate}). %
For better performance we need more sophisticated interpolation algorithms. 

\paragraph{Linear} Linear interpolation is a simple way to get a continuous CDF function with a density. %
\begin{align*}
    \intfunc_{\mathrm{linear}}(u) = \frac{i + (u-u_{(i)})/(u_{(i+1)} - u_{(i)})}{n+1}
\end{align*}
for $u \in [u_{(i)}, u_{(i+1)})$. A piecewise linear CDF is differentiable almost everywhere, so the log likelihood and density function are well-defined almost everywhere. Linear interpolation can perfectly fit any monotonic sequence, and is therefore $0$-accurate. 

\paragraph{Neural Autoregressive Flow (NAF)} To achieve even better smoothness properties, we can use a neural autoregressive flow (NAF), which is a class of deep neural networks that can universally approximate bounded continuous monotonic functions~\citep{huang2018neural}. The benefit of using a NAF is that the resulting CDF will be more ``smooth''. In fact, if we use a differentiable activation function for the NAF network (such as sigmoid rather than ReLU), then NAF represents smooth CDF functions that are differentiable everywhere. 

The short-coming is that NAF can only represent arbitrary monotonic functions (and hence be a $0$-accurate interpolation algorithm) if the network is infinitely wide. In our experiments, a network with 200 units is sufficient to push the errors below numerical precision.

\paragraph{Random} Finally there is an interpolation algorithm that uses randomization, which would recover the algorithm in \citep{vovk2020conformal}. Let $U$ be uniform on $[0, 1]$. %
\begin{align*}
    \intfunc_{\mathrm{random}}(u) = (i + U)/(n+1) \qquad \text{if } u \in [u_{(i)}, u_{(i+1)}) %
\end{align*}
Compared to linear and NAF, random interpolation has some shortcomings: the CDF is not continuous and the standard deviation is undefined. However, random interpolation has an important theoretical advantage, in that it guarantees that Algorithm \ref{alg:general} is 0-calibrated (i.e., perfectly calibrated). 
In our experiments, this theoretical advantage does not lead to lower calibration error in general, as all methods have near zero ECE. A detailed comparison of the interpolation algorithms in shown in Figure~\ref{fig:interpolation}.

\section{Towards Unifying Calibrated Regression}
\label{sec:generalization}

In this section, we show that modular conformal calibration recovers popular methods for calibrated regression. This implies that the calibration guarantees in this paper also apply to the methods discussed in this section. We also hope to shed light on connections between previously distinct streams of research.

We first observe that isotonic calibration~\citep{kuleshov2018accurate,malik2019calibrated} is recovered by MCC.

\begin{observation}[On Isotonic Calibration]
  Algorithm 1 in \citep{kuleshov2018accurate} is equivalent to Algorithm~\ref{alg:general} in our paper with a distribution base predictor, $\calfunc_\text{cdf}$ and $\intfunc_{\mathrm{linear}}$. 
\end{observation}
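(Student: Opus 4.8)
The plan is to unfold both procedures and check that they agree component-by-component: identical calibration scores, identical learned monotone map, and hence identical composed CDF predictor. First I would recall the three steps of Algorithm 1 in \citep{kuleshov2018accurate}: given the base distribution predictor $f$, it (i) computes the predicted quantiles $p_i = f[X_i](Y_i)$ on the calibration set, (ii) forms the recalibration dataset $\{(p_i, \hat{P}(p_i))\}_{i=1}^n$ with empirical frequencies $\hat{P}(p) = \frac{1}{n}\sum_{j} \indicarg{p_j \le p}$ and fits an isotonic regression map $R$ to these pairs, and (iii) returns the recalibrated predictor $x \mapsto R \circ f[x]$. I would place this side by side with Algorithm~\ref{alg:general} instantiated with a distribution base predictor, $\calfunc_\text{cdf}$, and $\intfunc_{\mathrm{linear}}$, which computes scores $\calscore_i = \calfunc_\text{cdf}(f(X_i), Y_i)$, fits $\intfunc = \intfunc_{\mathrm{linear}}$, and returns $H[x](y) = \intfunc(\calfunc_\text{cdf}(f(x), y))$.

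The first and easiest match is the calibration scores. By definition $\calfunc_\text{cdf}(f(X_i), Y_i) = f[X_i](Y_i) = p_i$, so both algorithms operate on the identical multiset of values $\{p_1, \dots, p_n\}$; in particular the recalibration inputs of \citep{kuleshov2018accurate} are exactly the inputs $u_i$ passed to the interpolation algorithm $\intalg$.

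The crux of the argument, and the step I expect to be the main obstacle, is showing the two learned monotone maps coincide. Here the key observation is that under the absolute-continuity hypothesis of Theorem~\ref{thm:calibration} the scores are almost surely distinct, so after sorting $p_{(1)} < \cdots < p_{(n)}$ the empirical-frequency targets satisfy $\hat{P}(p_{(i)}) = i/n$, which is already a \emph{strictly increasing} sequence. Isotonic regression applied to an already-monotone target sequence returns that sequence unchanged (the isotonic projection of a monotone vector is itself), and the fitted map is evaluated at new points by linear interpolation between these anchors. Hence $R$ is precisely the piecewise-linear function sending $p_{(i)} \mapsto i/n$, which is exactly the map produced by $\intfunc_{\mathrm{linear}}$ up to the boundary normalization $i/n$ versus $i/(n+1)$ (a convention that only governs how mass is assigned beyond the extreme observed scores). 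The two delicate points to verify are that isotonic regression does not alter an already-monotone target vector and that both methods interpolate linearly between the same anchor points.

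Finally I would compose the pieces. Since both the scores and the monotone maps agree, the output CDFs satisfy $H_{\mathrm{MCC}}[x](y) = \intfunc(f[x](y)) = R(f[x](y)) = H_{\mathrm{iso}}[x](y)$ for every $x$ and $y$, which establishes the claimed equivalence; the finite-sample calibration guarantee of Theorem~\ref{thm:calibration} then transfers to isotonic calibration as an immediate corollary.
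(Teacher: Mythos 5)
Your overall strategy---unfold both procedures and match the calibration scores, the learned monotone maps, and then the compositions---is the natural one, and in fact the paper states this observation without any accompanying proof, so your argument supplies something the paper leaves implicit rather than paralleling an existing derivation. Two of your three steps are sound: the scores coincide exactly ($\calfunc_\text{cdf}(f(X_i),Y_i) = f[X_i](Y_i) = p_i$), and your reduction of isotonic regression is correct---with almost surely distinct scores the targets $\hat{P}(p_{(i)}) = i/n$ are already strictly increasing, the isotonic projection of a monotone vector is itself, and the fitted map under the standard evaluation rule is the piecewise-linear function through those anchors.

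The gap is in how you dispose of the normalization mismatch, which you describe as ``a convention that only governs how mass is assigned beyond the extreme observed scores.'' That characterization is wrong: the map of \citet{kuleshov2018accurate} sends $p_{(i)} \mapsto i/n$ while $\intfunc_{\mathrm{linear}}$ sends $u_{(i)} \mapsto i/(n+1)$, and these differ at \emph{every} anchor by $i/(n(n+1))$, with the discrepancy reaching $1/(n+1)$ at the top anchor; the two composed CDF predictors are therefore different functions on essentially the whole range, not just in the tails. This matters quantitatively. With the literal $i/n$ anchors, Kuleshov's interpolation map is $\lambda$-accurate in the sense of Definition~\ref{def:accurate} only for $\lambda = 1$, since $\lvert i/n - i/(n+1)\rvert = \frac{i/n}{n+1}$ forces $\lambda \geq i/n$, which equals $1$ at $i = n$. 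Theorem~\ref{thm:calibration} then yields a $2/(n+1)$ calibration bound, not the $1/(n+1)$ bound that exact equivalence with the $0$-accurate $\intfunc_{\mathrm{linear}}$ would give (and which the paper asserts in the remark following the observation). To close the argument you should either state the equivalence explicitly as holding modulo the $i/n$ versus $i/(n+1)$ normalization and track the resulting weaker constant, or note that defining the empirical-frequency targets as $i/(n+1)$ (rather than $i/n$) makes the identification with $\intfunc_{\mathrm{linear}}$ exact, after which your composition step goes through verbatim.
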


Interestingly, this allows us to give new guarantees on the performance of Algorithm 1 in \citep{kuleshov2018accurate}. In particular, we can use Theorem~\ref{thm:calibration} and conclude that Algorithm 1 in \citep{kuleshov2018accurate} is $1/(n+1)$-calibrated. This result was not available in \citep{kuleshov2018accurate}.

\begin{observation}[On Conformal Calibration]
  Algorithm 1 in \citet{vovk2020conformal} is equivalent to Algorithm~\ref{alg:general} in our paper with a distribution base predictor, $\calfunc_\text{cdf}$ and $\intfunc_{\mathrm{random}}$. 
\end{observation}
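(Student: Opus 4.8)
The plan is to prove this equivalence as a bookkeeping exercise: I would unfold both procedures and verify that, for every test input $x$ and query point $y$, they return the identical (randomized) value for $H[x](y)$. The proof reduces to matching three pieces in turn — the calibration scores, the randomized rank transform, and the final composition that yields a CDF — so there is no deep argument, only a careful term-by-term identification.

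First I would write out the MCC instantiation explicitly. With a distribution base predictor $f : \Xc \to \Fc(\Yc)$ and the score $\calfunc_\text{cdf}(f(x), y) = f[x](y)$, Algorithm~\ref{alg:general} computes $\calscore_i = f[X_i](Y_i)$ for $i = 1, \dots, n$, which are exactly the empirical quantiles (PIT values) that conformal calibration evaluates on the calibration set. Feeding these through $\intfunc_{\mathrm{random}}$ gives the predictor
\[
H[x](y) = \intfunc_{\mathrm{random}}(f[x](y)) = \frac{i + U}{n+1}, \quad \text{when } f[x](y) \in [\calscore_{(i)}, \calscore_{(i+1)}),
\]
for a single shared draw $U \sim \mathrm{Uniform}[0,1]$.

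Next I would recall the explicit form of Algorithm 1 in \citet{vovk2020conformal}: their conformal calibrator assigns to the query the randomized value
\[
\frac{|\{i : \calscore_i < f[x](y)\}| + \tau\,(1 + |\{i : \calscore_i = f[x](y)\}|)}{n+1}, \qquad \tau \sim \mathrm{Uniform}[0,1].
\]
The key step is then the identification: when $f[x](y) \in [\calscore_{(i)}, \calscore_{(i+1)})$, precisely the $i$ scores $\calscore_{(1)}, \dots, \calscore_{(i)}$ are at most $f[x](y)$, so the count $|\{i : \calscore_i < f[x](y)\}|$ equals $i$ and the tie count $|\{i : \calscore_i = f[x](y)\}|$ equals $0$; taking $\tau = U$ then makes the two displays coincide identically, including the boundary cases $i = 0$ and $i = n$.

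The main obstacle — and the only place the hypotheses do real work — is reconciling the two indexing conventions at ties. The count used by $\intfunc_{\mathrm{random}}$ (number of calibration scores $\leq f[x](y)$) and the count used by the conformal $p$-value (number strictly less) agree exactly when $f[x](y)$ lands strictly between two order statistics, and the conformal expression carries an extra tie-correction term that MCC's half-open intervals do not. I would resolve this by invoking the same absolute-continuity assumption on $\calfunc(f(X), Y)$ used in Theorem~\ref{thm:calibration}: it guarantees that the calibration scores are almost surely distinct and that $f[x](y)$ almost surely avoids every $\calscore_i$, so the tie term $|\{i : \calscore_i = f[x](y)\}|$ vanishes and both transforms collapse to $(i + \tau)/(n+1)$ on a set of full measure. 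I would close by observing that each algorithm uses a single uniform draw, so the randomizations agree in distribution and the two procedures define the same randomized CDF predictor, which completes the equivalence.
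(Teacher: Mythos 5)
Your proposal is correct and takes essentially the approach the paper intends: the paper states this result as an Observation with no written proof, treating it as a direct unfolding of definitions, and your term-by-term identification (PIT scores as calibration scores, the smoothed conformal transducer as the randomized rank map $\intfunc_{\mathrm{random}}$, and the composition as the CDF predictor) is precisely that identification spelled out. Your handling of the only subtlety---the tie-correction term in the conformal transducer versus the half-open intervals of $\intfunc_{\mathrm{random}}$, resolved almost surely via the same absolute-continuity assumption used in Theorem~\ref{thm:calibration}---fills in a detail the paper leaves implicit.
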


This makes it clear that conformal calibration and isotonic calibration are tightly connected. The most significant difference between the two methods is that conformal calibration uses a randomized interpolation algorithm. Randomization gives better calibration guarantees at the cost of worse behavior for the distribution predictions (e.g., the predicted distributions are  discontinuous so the log likelihood is ill-defined).

\subsection{Connection to Conformal Interval Prediction}

Conformal prediction~\citep{vovk2005algorithmic,shafer2008tutorial,romano2019conformalized} is a family of (provably) exact interval forecasting algorithms (see, e.g., Proposition \ref{prop:conformal_valid} in Appendix \ref{app:proofs}). Conformal interval prediction uses a proper non-conformity score $\confunc: \Xc \times \Yc \to \Rbb$, which is any continuous function that is strictly unimodal in $y$ (see Appendix \ref{app:proper}). Intuitively, the non-conformity score measures how well the label $y$ matches the input $x$. For example, given a base point prediction function $f: \Xc \to \Rbb$ the absolute residue of the prediction $\confunc(x, y) = |y - f(x)|$ is a natural choice~\citep{vovk2005algorithmic}. 
For a confidence level $c \in (0, 1)$, the conformal forecast is defined as
\begin{align}
    \label{eq:conformal_interval_prediction}
    &I_c(X_*) \\
    &= \left\lbrace y \in \Yc \mid \frac{1}{n} \sum_{i=1}^n \indic \{ \confunc(X_i, Y_i)   \leq \confunc(X_*, y) \}  \leq c   \right\rbrace \nonumber
\end{align}

On the other hand, one can trivially construct a valid confidence interval from a calibrated distribution predictor. Consider the map $\eta_c: \Fc(\Yc) \to \Rbb^2$, which maps any CDF into two numbers that represent a $c$-credible interval.
\begin{align*} 
\eta_c: H[x] \mapsto H[x]^{-1} ( (1 + c) / 2), H[x]^{-1}( (1 - c) / 2)
\end{align*} 
Intuitively, $\eta_c$ returns an interval that has $c$ probability under the distribution $H[x]$. We then ask: \emph{Can modular conformal calibration yield comparable interval predictions to conformal interval prediction?} We answer this question in the affirmative, both theoretically and empirically.

\begin{restatable}{theorem}{thmequivalence}
\label{thm:equivalence}
For the conformal interval predictor $I_c$ with proper non-conformity score, there exists a calibration score $\calfunc$, such that the distribution predictor $H$ given by MCC with calibration score $\calfunc$ and any $0$-exact interpolation algorithm satisfies  
\begin{align}
    \label{eq:conformal_equivalence}
    H[X](U) - H[X](L) \in c \pm \frac{1-c}{n+1} \quad \text{a.s.} %
\end{align}
where $L, U$ are lower/upper bounds of the interval $I_c(X)$. %
\end{restatable}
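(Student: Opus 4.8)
The plan is to manufacture the calibration score $\calfunc$ directly from the proper non-conformity score $\confunc$ by ``unfolding'' its unimodal profile into a monotone one, and then to read off $H[X](U)-H[X](L)$ as the difference of the interpolation map $\intfunc$ evaluated at two symmetric points. Since $\confunc(x,\cdot)$ is continuous and strictly unimodal, it has a unique minimizer $y^*(x)$, and I define
\[
\calfunc(x,y) = \sign\!\left(y - y^*(x)\right)\,\confunc(x,y).
\]
On $\{y < y^*(x)\}$ this equals $-\confunc(x,y)$, which is strictly increasing because $\confunc(x,\cdot)$ is strictly decreasing there; on $\{y > y^*(x)\}$ it equals $\confunc(x,y)$, strictly increasing; so $\calfunc$ is a legitimate calibration score (strictly increasing in $y$). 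The decisive structural feature is that $|\calfunc(x,y)| = \confunc(x,y)$, so the sign records which side of the mode $y$ sits on while the magnitude is exactly the non-conformity score. (When $\min_y\confunc(x,y)>0$ this $\calfunc$ has a jump at $y^*(x)$, which is harmless: it is still strictly increasing and leaves $\confunc(f(X),Y)$ absolutely continuous.)

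First I would characterize $I_c(X)$. By strict unimodality, each sublevel set $\{y:\confunc(X,y)\le t\}$ is a single interval about $y^*(X)$, so $I_c(X)=[L,U]$ with $L<y^*(X)<U$ and a common threshold $\confunc(X,L)=\confunc(X,U)=\tau$. Comparing the defining inequality of $I_c$ against the sorted scores $R_{(1)}\le\dots\le R_{(n)}$, where $R_i=\confunc(X_i,Y_i)$, identifies $\tau$ as the order statistic at which the empirical fraction crosses $c$: with $k=\lfloor cn\rfloor$ one gets $\tau=R_{(k+1)}$, so $|\{i:R_i<\tau\}|=k$ and $|\{i:R_i\le\tau\}|=k+1$ (a.s., using absolute continuity to rule out ties).

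Next I would translate the endpoints into calibration scores. Since $U>y^*(X)$ and $L<y^*(X)$, we have $\calfunc(X,U)=+\tau$ and $\calfunc(X,L)=-\tau$, whence $H[X](U)-H[X](L)=\intfunc(\tau)-\intfunc(-\tau)$. The crux is the combinatorial identity for the calibration scores $\calscore_i=\calfunc(X_i,Y_i)$ lying in $(-\tau,\tau]$: because $|\calscore_i|=R_i$ and $\tau>0$, a point below its mode contributes iff $R_i<\tau$ and a point above its mode contributes iff $R_i\le\tau$, giving $|\{i:-\tau<\calscore_i\le\tau\}| = |\{i:R_i\le\tau\}|$ up to measure-zero ties. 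Writing $p,q$ for the ranks of $-\tau,\tau$ among the sorted calibration scores, this pins $q-p$ to $k$ or $k+1$; moreover the single training point attaining $R_{(k+1)}=\tau$ contributes a calibration score equal to exactly $+\tau$ or $-\tau$, which fixes one of $\intfunc(\tau),\intfunc(-\tau)$ to its exact $0$-accurate value $\mathrm{rank}/(n+1)$ (Definition~\ref{def:accurate} with $\lambda=0$), while monotonicity squeezes the other within $1/(n+1)$.

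Finally, feeding the ranks into $\intfunc(\calscore_{(i)})=i/(n+1)$ yields $H[X](U)-H[X](L)\in\big[\,k/(n+1),\,(k+1)/(n+1)\,\big]$ with $k=\lfloor cn\rfloor$, and I would convert this to the stated symmetric window about $c$ via $cn-1<k\le cn$. I expect this last constant-chasing to be the main obstacle: the structural argument cleanly produces a window of width $1/(n+1)$, and reconciling it with the claimed half-width $\tfrac{1-c}{n+1}$ requires careful tracking of the floor and of which endpoint is pinned (the two agree, and the bound is tight, at $c=\tfrac12$). A secondary technical nuisance to handle carefully is the tie/boundary bookkeeping when $\tau$ coincides with a training score, together with the open-versus-closed convention for $I_c(X)$, which is precisely where the $\pm$ slack in \eqref{eq:conformal_equivalence} is absorbed.
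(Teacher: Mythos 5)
Your construction and overall route are essentially identical to the paper's own proof: the paper also defines $\calfunc$ by flipping the sign of $\confunc$ to the left of its minimizer (so that $|\calfunc(x,y)|=\confunc(x,y)$), also observes that by continuity the two endpoints share a common threshold $\confunc(X,L)=\confunc(X,U)=\tau$, and also splits into two cases according to whether the training point attaining that threshold sits above or below its own mode (your ``score equals $+\tau$ vs.\ $-\tau$'' dichotomy is the paper's ``situation 1 vs.\ situation 2''), concluding in each case that one of $\intfunc(\tau),\intfunc(-\tau)$ is pinned exactly by $0$-accuracy while the other floats within a gap of $1/(n+1)$. Two small remarks. First, your identification $\tau=R_{(\lfloor cn\rfloor+1)}$ is a cleaner and sharper replacement for the paper's perturbation argument, which only establishes $cn\le\#\{i:R_i\le\tau\}\le cn+1$. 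Second, your $\mathrm{sign}$-based definition of $\calfunc$ silently requires $\confunc\ge 0$ (if $\confunc$ takes negative values, monotonicity fails near the minimizer); the paper handles this by noting conformal prediction is invariant to adding a constant to $\confunc$, so one may assume $0$ is a lower bound without loss of generality.

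The obstacle you flag at the end is not mere constant-chasing --- it is a genuine discrepancy, and it is exactly the point where the paper's own proof slips. From the window that you (and the paper) derive,
\begin{align*}
H[X](U)-H[X](L)\in\left[\tfrac{\lfloor cn\rfloor}{n+1},\,\tfrac{\lfloor cn\rfloor+1}{n+1}\right],
\end{align*}
the upper half of the claim follows, since $\frac{\lfloor cn\rfloor+1}{n+1}\le\frac{cn+1}{n+1}=c+\frac{1-c}{n+1}$. The lower half does not: all one can say is $\frac{\lfloor cn\rfloor}{n+1}>\frac{cn-1}{n+1}=c-\frac{1+c}{n+1}$. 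Indeed, for $c>1/2$ the derived window has width $\frac{1}{n+1}$, strictly wider than the claimed symmetric window of width $\frac{2(1-c)}{n+1}$, so containment is impossible in general. Concretely, take $n=10$, $c=0.9$, all training points above their modes: then $\intfunc(\tau)=\frac{10}{11}$ is pinned, while a perfectly valid $0$-accurate interpolation may set $\intfunc(u)=\frac{1}{11}$ for all $u\le \calscore_{(1)}$, giving $H[X](U)-H[X](L)=\frac{9}{11}\approx 0.818$, well below $c-\frac{1-c}{n+1}\approx 0.891$. The paper glosses over this with an arithmetic error in its final lines: it asserts $c-\frac{cT-1}{T+1}=\frac{c-1}{T+1}$, whereas in fact $c-\frac{cT-1}{T+1}=\frac{c+1}{T+1}$. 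So your attempt is correct up to precisely the step at which the paper's proof is wrong; what actually follows from both arguments is the asymmetric guarantee $H[X](U)-H[X](L)\in\left(c-\frac{1+c}{n+1},\,c+\frac{1-c}{n+1}\right]$ (equivalently, the order-statistic window above), and you should not expect to close the gap to the symmetric $c\pm\frac{1-c}{n+1}$ as stated.
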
 
See Appendix \ref{app:proofs} for a proof. Theorem \ref{thm:equivalence} states that the conformal prediction interval $[L, U]$ is also a $c$ credible interval (up to $(1-c)/(n+1)$ error) of a distribution prediction made by MCC. In other words, if we know the distribution predicted by the appropriate MCC algorithm, then we can construct the conformal prediction interval by taking a $c$ credible interval.

\begin{table*}[t]
    \centering
    \begin{tabular}{lllll}
\toprule
{} &                                      \textbf{STD} &                            \textbf{95\% CI Width} &                                      \textbf{NLL} &                                     \textbf{CRPS} \\
\midrule
\textsc{zscore-NAF}    &                                 $0.442 \pm 0.003$ &                                 $1.874 \pm 0.037$ &  $\cellcolor{green!25}{\mathbf{0.297 \pm 0.022}}$ &                                 $0.232 \pm 0.002$ \\
\textsc{zscore-linear} &  $\cellcolor{green!25}{\mathbf{0.435 \pm 0.003}}$ &                                 $1.766 \pm 0.016$ &                                 $0.534 \pm 0.021$ &                                 $0.232 \pm 0.002$ \\
\textsc{zscore-random} &                                 $0.438 \pm 0.003$ &                                 $1.776 \pm 0.016$ &                                 N/A &                                 $0.232 \pm 0.002$ \\
\textsc{cdf-NAF}       &                                 $0.446 \pm 0.005$ &  $\cellcolor{green!25}{\mathbf{1.723 \pm 0.027}}$ &                                 $0.465 \pm 0.144$ &                                 $0.245 \pm 0.007$ \\
\textsc{cdf-linear}*    &                                 $0.562 \pm 0.032$ &                                 $1.851 \pm 0.033$ &                                 $0.433 \pm 0.017$ &                                 $0.233 \pm 0.002$ \\
\textsc{cdf-random}*    &                                 $0.587 \pm 0.058$ &                                 $1.851 \pm 0.033$ &                                 N/A &  $\cellcolor{green!25}{\mathbf{0.217 \pm 0.000}}$ \\
\bottomrule
\end{tabular}

    \caption{A comparison of calibration scores and interpolation algorithms when the base predictor is a distribution prediction (* indicates an existing algorithm we compare against). Note that \textsc{CDF-Linear} corresponds to the isotonic recalibration baseline and \textsc{CDF-Random} corresponds to the conformal calibration baseline.  Disaggregated experimental results are shown in Appendix \ref{app:full_experiments}.
    }
    \label{tab:interpolators}
\end{table*}

\begin{table*}[t]
    \centering
    \begin{tabular}{lllll}
\toprule
{} &                                      \textbf{STD} &                            \textbf{95\% CI Width} &                                       \textbf{NLL} &                                     \textbf{CRPS} \\
\midrule
\textsc{point}        &                                 $0.467 \pm 0.006$ &                                 $1.927 \pm 0.016$ &                                  $0.611 \pm 0.017$ &                                 $0.242 \pm 0.002$ \\
\textsc{interval}     &                                 $0.830 \pm 0.336$ &                                 $1.832 \pm 0.034$ &                                 $-0.051 \pm 0.025$ &                                 $0.256 \pm 0.002$ \\
\textsc{quantile-2}   &                                 $0.449 \pm 0.004$ &                                 $1.790 \pm 0.019$ &                                 $-0.101 \pm 0.019$ &                                 $0.228 \pm 0.002$ \\
\textsc{quantile-4}   &                                 $0.439 \pm 0.003$ &                                 $1.692 \pm 0.016$ &  $\cellcolor{green!25}{\mathbf{-0.109 \pm 0.027}}$ &  $\cellcolor{green!25}{\mathbf{0.226 \pm 0.002}}$ \\
\textsc{quantile-7}   &                                 $0.434 \pm 0.003$ &                                 $1.629 \pm 0.015$ &                                 $-0.103 \pm 0.021$ &  $\cellcolor{green!25}{\mathbf{0.226 \pm 0.002}}$ \\
\textsc{quantile-10}  &  $\cellcolor{green!25}{\mathbf{0.432 \pm 0.002}}$ &  $\cellcolor{green!25}{\mathbf{1.625 \pm 0.012}}$ &                                 $-0.042 \pm 0.032$ &  $\cellcolor{green!25}{\mathbf{0.226 \pm 0.002}}$ \\
\textsc{ensemble}     &                                 $0.491 \pm 0.009$ &                                 $1.795 \pm 0.021$ &                                  $0.384 \pm 0.017$ &                                 $0.227 \pm 0.002$ \\
\textsc{distribution} &                                 $0.562 \pm 0.032$ &                                 $1.851 \pm 0.033$ &                                  $0.433 \pm 0.017$ &                                 $0.233 \pm 0.002$ \\
\bottomrule
\end{tabular}
    \caption{A comparison of base predictors. We find that quantile predictors outperform all other prediction types on both sharpness metrics (\metricfont{STD}, \metricfont{95\% CI Width}) and proper scoring rules (\metricfont{NLL}, \metricfont{CRPS}).
    }
    \label{tab:predictors}
\end{table*}

We only know that the conformal prediction interval is \textit{some} credible interval of the distribution prediction, but we don't know \textit{which} credible interval (i.e., $\eta_c$ may not be the correct credible interval). We explore this complicating factor empirically: in particular, we will show that in practice, the conformal interval predictor $I_c$ and the credible interval $\eta_c \circ H[x]$ (with the calibration score $\calfunc$ that is associated with the non-conformity score $\confunc$) have similar performance (see Figure \ref{fig:compare_interval}).

\section{Empirical Study of Recalibration}
\label{sec:experiment}

Our framework introduces three decisions when choosing a recalibration algorithm: the baseline predictor, the calibration score, and the interpolation algorithm. In this section, we investigate how those choices affect performance. We evaluate each combination of 8 base prediction types and 3 interpolation algorithms across 17 regression tasks with 16 random train/test splits per regression task. We also test all of the calibration scores defined in Section \ref{sec:cal_scores}. In total, we train 7,344 calibrated distribution predictors and evaluate each predictor across 4 metrics for a total of 29,376 model evaluations. We summarize our experimental findings in Table \ref{tab:interpolators}, Table \ref{tab:predictors}, and Figure \ref{fig:compare_interval}.

\paragraph{Datasets} We compare MCC algorithms on 17 tabular regression datasets. Most datasets come from the UCI database \citep{dua2019uci}. 
For each dataset we allocate 60\% of the data to learn the base predictor, 20\% for recalibration and 20\% for testing.

\paragraph{Base Predictors} We compare all five prediction types considered in this paper (see Table \ref{tab:prediction_types}). For each base predictor, we use a simple three layer neural network and optimize it with gradient descent. The different base predictors only differ in the number of output dimensions, and the learning objective (i.e. the learning objective should be a proper scoring rule for that prediction type). We try to make the architectures and optimizers of the base predictors as similar as possible across prediction types to isolate the impact of the choice of prediction type, calibration score, and interpolation algorithm, as opposed to the strength of the base predictor. We compare the following base prediction types: 

For \textsc{point} predictors the output dimension is 1 and we minimize the L2 error. For \textsc{quantile} predictors we use 2, 4, 7, 10 equally spaced quantiles (denoted in the plots as quantile-2, quantile-4, quantile-7, quantile-10). For example, for quantile-4 we predict the $1/8, 3/8, 5/8, 7/8$ quantiles. We optimize the neural network with the pinball loss. For \textsc{interval} predictors we use the same setup as \citep{romano2019conformalized} which is equivalent to quantile regression with $5\%, 95\%$ quantiles. For \textsc{distribution} predictors the output of the neural network is 2 dimensions, and we interpret the two dimension as the mean / standard deviation of a Gaussian. We optimize the neural network with the negative log likelihood.
For \textsc{ensemble} predictors we use the setup in \citep{lakshminarayanan2017simple} and learn an ensemble of Gaussian distribution predictors.

\paragraph{Metrics} We compare five measurements of prediction quality. \metricfont{NLL} is the negative log likelihood of the label under the predicted distribution.
\metricfont{CRPS} is the continuous ranked probability score~\citep{hersbach2000decomposition}.
Compared to \metricfont{NLL}, \metricfont{CRPS} is well-defined even for distributions that do not have a density, while \metricfont{NLL} is undefined for such distributions. \metricfont{STD} is the standard deviation of the predicted distribution, a smaller std corresponds to improved sharpness and is generally preferred (all else held equal). \metricfont{95\% CI Width} is the size of centered 95\% credible intervals given by each distribution prediction %
A smaller interval is better (assuming all else are equal). 
\metricfont{ECE} is the expected calibration error~\citep{kuleshov2018accurate}; we use debiased \metricfont{ECE} which should be zero if the predictions are perfectly calibrated.

\paragraph{Results} We find that different recalibration algorithms perform optimally according to different metrics. This supports the need for flexible design frameworks that apply broadly and can be adjusted to the needs of a particular problem. In general, we find that quantile predictors are very effective base predictors that, perhaps surprisingly, tended to outperform distribution base predictors in our experiments. The findings of our experiments are summarized in Table \ref{tab:predictors}, Table \ref{tab:interpolators} and Figure \ref{fig:compare_interval}.

\begin{figure*}
    \centering
    \includegraphics[width=\linewidth]{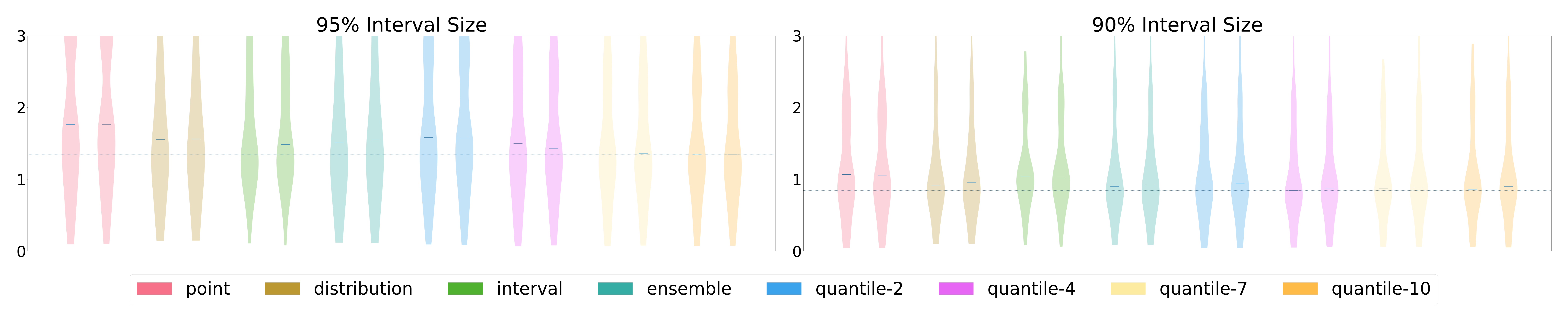}
    \caption{Comparing the interval size from conformal interval prediction (left of each pair) versus credible intervals from recalibrated predictors (right of each pair) for a variety of base prediction types.The intervals from both methods obtain the nominal coverage. The distribution of interval sizes are very similar between the two methods, indicating that recalibration is empirically comparable to conformal interval prediction in its ability to provide interval predictions. }
    \label{fig:compare_interval}
\end{figure*}
\paragraph{On the choice of base predictor}

We find that all base prediction types can be recalibrated to give models with very good calibration. All base predictors we tested achieved an average test ECE of less than 0.007 after recalibration, across the 17 datasets. This is consistent with the calibration guarantee given by our framework, which says that a recalibrated model will be $O(1/n)$-calibrated. 

Quantile predictors performed best on the other four metrics we considered: quantile-10 performed best on the two sharpness metrics \metricfont{STD} and \metricfont{95\% CI Width}, while quantile-4 performed best in terms of \metricfont{NLL} and the three quantile predictors had the same performance on \metricfont{CRPS} (see Table \ref{tab:predictors}). Interestingly, quantile base predictors outperformed distribution estimators on \metricfont{NLL}, even though the distribution predictors were directly trained to optimize \metricfont{NLL}. %
These results indicate that quantile prediction is a promising strategy for learning distribution predictors.

\paragraph{On the choice of calibration score}

We investigate the role of the calibration score for a base predictor that already makes distribution predictions (see Table \ref{tab:interpolators}). Specifically, we compare two natural choices: $\calfunc_\text{cdf}$ and $\calfunc_\text{zscore}$. The $\calfunc_\text{cdf}$ calibration score computes the quantile of the observed label under the predicted distribution, and is the calibration score used by isotonic calibration and conformal calibration. The $\calfunc_\text{cdf}$ calibration score computes the number of standard deviations between the mean of the predicted distribution and the observed label. We find that $\calfunc_\text{z-score}$ and $\calfunc_\text{cdf}$ are effective under different metrics. The $\calfunc_\text{cdf}$ calibration score performs better for \metricfont{CRPS} and \metricfont{95\% CI Width}, while $\calfunc_\text{zscore}$ performs better for \metricfont{NLL} and \metricfont{STD}. %

\paragraph{On the choice of interpolation algorithm}

We compare three interpolation algorithms: Linear interpolation which is simple and stable, random interpolation which provides improved calibration guarantees, and Neural Autoregressive Flow (NAF) interpolation which uses a more sophisticated neural network approach to interpolation. We find that NAF interpolation performs best on \metricfont{NLL} and \metricfont{95\% CI Width}, linear interpolation performs best on \metricfont{STD}, and random interpolation performs best on \metricfont{CRPS}. The random interpolator leads to distribution predictions with infinite \metricfont{STD} and undefine \metricfont{NLL}, so is not appropriate when those metrics are of importance. The most appropriate interpolator is likely to vary between use cases.

\paragraph{On interval prediction}

In this experiment, we explore whether recalibration can yield high quality interval predictions, by comparing to conformal interval prediction, a standard approach for producing interval predictors from any base predictor. Recall that Theorem \ref{thm:equivalence} tells us that conformal interval prediction can approximately be recovered by taking \emph{some} credible interval of a recalibrated predictor. However, since we cannot identify which credible interval it should be a priori, we test in these experiments whether it is sufficient to simply take the centered credible interval; for example, the interval between the 5\% and 95\% quantiles of the predicted distributions. 

We find that this recalibration yields interval predictions that effectively approximate conformal interval prediction (see Figure \ref{fig:compare_interval}). Conformal interval prediction tends to produce slightly shorter intervals than recalibration (both methods achieve the nominal coverage). This shows that recalibration can be applied broadly, even when the downstream task is unknown. If we recalibrate a model to make distribution predictions then decide that we need interval predictions, we can extract credible intervals from the distribution predictor that are comparable to methods designed to directly produce interval predictions.

\section{Discussion}

Recalibration is a convenient and effective way to build calibrated distribution predictors. Flexible methods for uncertainty quantification empower more practitioners to use uncertainty quantification, improving the reliability of both fully-automated systems and decision support systems. Modular conformal calibration organizes and simplifies the process of choosing a recalibration technique, and provides guarantees that the resulting models will be calibrated. As a consequence, we believe that further developing principled and adaptive techniques for choosing between these recalibration algorithms is a promising direction for future work. 

\section*{Acknowledgements} CM is supported by the NSF GRFP. This research was supported by NSF (\#1651565), AFOSR (FA95501910024), ARO (W911NF-21-1-0125) and a Sloan Fellowship.

\bibliographystyle{icml2022}
\bibliography{ref}

\onecolumn 
\newpage
\appendix

\section{Interpolation Algorithms}
\label{app:calscores}
\paragraph{Quantile Prediction} Recall that the interpretation of a quantile prediction is that the probability $y$ is less than $f_k(x)$ should be $\alpha_k$, for $k=1, \dots, K$. %
\begin{align*}
    \calfunc_\text{quantile}(x, y) =
    \left\lbrace 
    \begin{array}{ll} \alpha_K + y - f_K(x) & y > f_K(x)  \\ 
    \alpha_k + \frac{y - f_k(x)}{f_{k+1}(x) - f_k(x) }(\alpha_{k+1} - \alpha_k) 
    & f_k(x) < y \leq f_{k+1}(x)  , \text{for } k=1, \cdots, K-1 \\
    \alpha_1 + y - f_1(x) & y \leq f_1(x) \\
    \end{array} \right. 
\end{align*}

Intuitively, if $y$ exactly equals the $\alpha_k$-th quantile, then $\calfunc_0(x, y) = \alpha_k$. For other values we use a  linear interpolation.

\paragraph{Ensemble Prediction} Given an ensemble consisting of $K$ predictors, we can define the calibration score recursively: for each of the $M$ predictors in the ensemble, we choose a calibration score $\calfunc_m$; the overall calibration score $\calfunc_\text{ensemble}$ is the summed calibration score $\calfunc_\text{ensemble}(x, y) = \sum_k \calfunc_k(x, y)$. Naturally, if there is prior information about the quality of these predictions, we can use a weighted sum where the higher quality predictions are given a higher weight.

\begin{figure*}
    \centering
    \begin{tabular}{c}
        \includegraphics[width=0.4\linewidth]{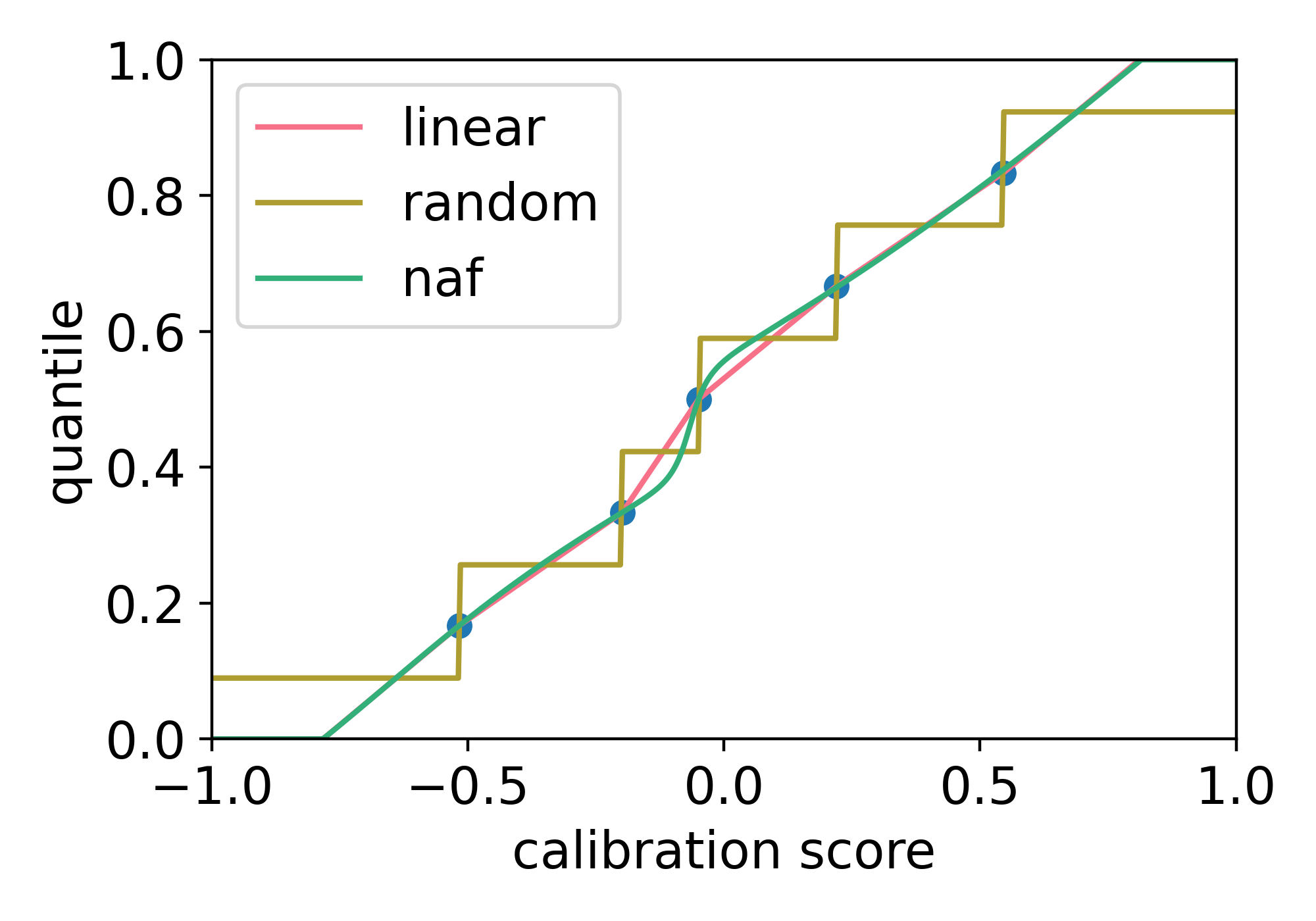} 
        \\
        \begin{tabular}{c|c|c|c}
         & Calibration error & Log-likelihood & Computation \\ \hline
       Random  & Perfect ($\color{green}{+}$) & Undefined ($\color{red}{-}$) & Fast ($\color{green}{+}$) \\
       Linear & $O(1/T)$ ($\color{yellow}{=}$) & Well defined ($\color{yellow}{=}$) & Fast ($\color{green}{+}$)  \\
       NAF & $O(1/T)$ w/ assumptions ($\color{red}{-}$) & Best (empirically) ($\color{green}{+}$) & Slow ($\color{red}{-}$) 
        \end{tabular} 
    \end{tabular}
    \caption{High-level comparison of different interpolation algorithms on different performance benchmarks. \textbf{Top:} A visualization of different interpolation algorithms. Given a set of arbitrary real-valued calibration scores, each interpolation algorithm maps the values to be evenly spaced across the interval [0, 1].  \textbf{Bottom:}$\color{green}{+}$, $\color{yellow}{=}$, $\color{red}{-}$ indicates best, intermediate, or worst.}
    \label{fig:interpolation}
\end{figure*}

\section{Proper Non-conformity Score} 
\label{app:proper}
To ensure that conformal prediction algorithms are ``well-behaved'', it is typical to put some restrictions on the non-conformity score. In particular, we say that a non-conformity score is proper if it is continuous and strictly unimodal in $y$. We require strict unimodality and continuity to ensure that the confidence intervals change smoothly when $c$ increases or decreases. Intuitively, an infinitesimal increase in $c$ should lead to an infinitesimal increase in the confidence interval. 

\section{Proofs} 
\label{app:proofs}

The important property of conformal prediction is that it is always $1/n$-exact (as in Definition 2), regardless of the true distribution of $X,Y$ and the non-conformity score $\confunc$. 

\begin{prop}\label{prop:conformal_valid} For any non-conformity score $\confunc$, if $\confunc(X, Y)$ is absolutely continuous, then  the conformal interval predictor $I_c$ is $1/n$-exact. 
\end{prop}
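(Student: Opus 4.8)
The plan is to reduce the coverage event $Y_* \in I_c(X_*)$ to a statement about the rank of the test score among the calibration scores, and then exploit exchangeability. First I would abbreviate the calibration scores as $S_i := \confunc(X_i, Y_i)$ for $i = 1, \dots, n$ and the test score as $S_* := \confunc(X_*, Y_*)$. Since the $n+1$ pairs are i.i.d., the scores $S_1, \dots, S_n, S_*$ are exchangeable; and since $\confunc(X,Y)$ is absolutely continuous, these $n+1$ real numbers are almost surely distinct. This distinctness is the role played by the regularity assumption: it lets me interchange $\le$ and $<$ inside the indicators without affecting anything almost surely.

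Next I would unfold the definition of $I_c$ in Equation~\eqref{eq:conformal_interval_prediction} at the point $y = Y_*$, so that the coverage event is exactly $\frac{1}{n}\sum_{i=1}^n \indic\{S_i \le S_*\} \le c$. Letting $R \in \{1, \dots, n+1\}$ be the rank of $S_*$ among all $n+1$ scores (with $R = 1$ the smallest), almost sure distinctness gives $\sum_{i=1}^n \indic\{S_i \le S_*\} = R - 1$, so the event becomes $R - 1 \le cn$, i.e. $R \le cn + 1$. By exchangeability together with a.s. distinctness, every ordering of the $n+1$ scores is equally likely, so $R$ is uniform on $\{1, \dots, n+1\}$. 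Hence $\Pr(Y_* \in I_c(X_*)) = \Pr(R \le cn + 1) = (\lfloor cn \rfloor + 1)/(n+1)$, using $c \le 1$ so that $\lfloor cn + 1\rfloor \le n+1$.

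Finally I would bound the discretization gap. Writing
\begin{equation*}
\frac{\lfloor cn \rfloor + 1}{n+1} - c = \frac{(\lfloor cn \rfloor - cn) + (1 - c)}{n+1},
\end{equation*}
and using $\lfloor cn \rfloor - cn \in (-1, 0]$ and $1 - c \in [0, 1]$, the numerator lies in $(-1, 1]$, so the coverage lies within $1/(n+1) \le 1/n$ of the nominal level $c$. This is precisely the assertion that $I_c$ is $1/n$-exact, and it holds for \emph{any} non-conformity score and any data distribution, since nothing beyond exchangeability and continuity was used.

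The main obstacle is the bookkeeping in the rank argument rather than any deep estimate: I must carefully justify that exchangeability plus a.s. distinctness yields a uniform rank $R$, and correctly track the passage between $\le$ and $<$ in the indicator sum, which is exactly where absolute continuity is needed. Once the event is expressed as $R \le cn+1$, the remaining floor-function estimate is elementary.
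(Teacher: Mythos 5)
Your proof is correct and follows essentially the same route as the paper's: both unfold the definition of $I_c$ at $y = Y_*$ and reduce coverage to an exchangeability statement about the $n+1$ scores (the paper via the tower property conditioned on the unordered bag of data points, you via uniformity of the rank $R$), finishing with a floor-function estimate. Your bookkeeping is in fact slightly more careful---you derive the exact coverage $(\lfloor cn\rfloor + 1)/(n+1)$ and a two-sided bound, whereas the paper states the conditional probability as $\lfloor nc\rfloor/n$ and only spells out the lower bound---but the underlying argument is the same.
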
 

\begin{proof}[Proof of Proposition~\ref{prop:conformal_valid} ]
\begin{align*}
    &\Pr[Y \in I_c(X)] = \Pr\left[\frac{1}{n} \# \lbrace i \mid  \confunc(X_i, Y_i) \leq \confunc(X, y) \rbrace \leq c  \right] \\
    &= \Eb\left[ \Pr\left[\frac{1}{n} \# \lbrace i \mid  \confunc(X_i, Y_i) \leq \confunc(X, Y) \rbrace \leq c    \mid \lbag  Z_1, \cdots, Z_n, (X, Y) \rbag \right] \right] \\
    &= \frac{\lfloor nc  \rfloor}{n} > \frac{nc -1}{n} = c - \frac{1}{n}
\end{align*}
where $Z_i=(X_i, Y_i)$.
\end{proof}

\thmcalibration*

\begin{proof}[Proof of Theorem~\ref{thm:calibration}]
By our assumption of absolute continuity, almost surely we have $a'_1 \neq a'_2 \neq \cdots \neq a'_T \neq \calfunc(X^*, Y)$. For notation convenience we also let $a_t = -\infty$ if $t < 1$ and $a_t = +\infty$ if $t > T$.  

By the assumption $\frac{t-\
lambda}{T+1} < q(a'_t) < \frac{t+\lambda}{T+1}$, if $q(a) \geq \frac{t+\lambda}{T+1}$ then $q(a) > q(a_t')$, which by monotonicity implies that $a > a_t'$, i.e. 
\begin{align}
\label{eq:implication1}
\text{if } \intfunc(a) \geq \frac{t+\lambda}{T+1} \text{ then } a > a_t'
\end{align}
similarly 
\begin{align}
\label{eq:implication2}
\text{if } \intfunc(a) \leq \frac{t-\lambda}{T+1} \text{ then } a < a_t'
\end{align} 

\begin{align*} 
&\Pr[H[X^*](Y) \leq c] :=  \Pr[\intfunc(\calfunc(X^*, Y)) \leq c] & \text{Definition} \\
&\leq \Pr\left[ \intfunc(\calfunc(X^*, Y))  \leq \frac{\lceil cT+c+\lambda \rceil - \lambda}{T+1}  \right] &  [i] \\
&\leq \Pr\left[ \calfunc(X^*, Y) < a_{\lceil cT+c+\lambda  \rceil}' \right] &  [i] + \text{Eq.(\ref{eq:implication2})} \\
&= \Ebb\left[ \Pr\left[ \calfunc(X^*, Y) < a_{\lceil cT+c+\lambda \rceil}' \mid \lbag Z_1, \cdots, Z_T, (X^*, Y) \rbag \right] \right]  & \text{Tower} \\
&\leq \lceil cT+c+\lambda  \rceil/(T+1) & \text{Symmetry} 
\end{align*} 
Where explanation $[i]$ is based on the property "$A \implies B \text{ then } \Pr[A] \leq \Pr[B]$"; the last inequality is usually an equality except when $c \approx 1$ then the upper bound will be greater than $1$. 
Similarly we have
\begin{align*} 
&\Pr[H[X_*](Y) \leq c] :=  1 - \Pr[\intfunc(\calfunc(X^*, Y)) > c]  & \text{Definition} \\
&\geq 1 - \Pr\left[ \intfunc(\calfunc(X^*, Y)) \geq \frac{\lfloor  cT + c - \lambda \rfloor + \lambda}{T+1}  \right] & [i] \\
&\geq 1 - \Pr\left[ \calfunc(X^*, Y) > a_{\lfloor cT +c - \lambda \rfloor}' \right]  & [i] + \text{Eq.(\ref{eq:implication1})} \\
&= 1 - \Ebb\left[ \Pr\left[ \calfunc(X^*, Y) > a_{\lfloor cT +c - \lambda \rfloor}' \mid \lbag Z_1, \cdots, Z_T, (X^*, Y) \rbag \right] \right] & \text{Tower} \\
&= \Ebb\left[ 1 - \Pr\left[ \calfunc(X^*, Y) > a_{\lfloor cT + c- \lambda \rfloor}' \mid \lbag Z_1, \cdots, Z_T, (X^*, Y) \rbag \right] \right] & \text{Linear} \\
&= \Ebb\left[ \Pr\left[ \calfunc(X^*, Y) \leq a_{\lfloor cT +c- \lambda \rfloor}' \mid \lbag Z_1, \cdots, Z_T, (X^*, Y) \rbag \right] \right] & \text{Linear} \\
&\geq \lfloor cT +c- \lambda \rfloor/(T+1)  & \text{Symmetry} 
\end{align*} 
Therefore we have 
\begin{align*}
    \Pr[H[X_*](Y) \leq c]  - c &\leq \frac{\lceil cT+c+\lambda  \rceil}{T+1}  - c < \frac{cT+c+\lambda +1}{T+1}  - c = \frac{1 + \lambda} {T+1} \\
    \Pr[H[X_*](Y) \leq c] - c &\geq \frac{\lfloor cT +c- \lambda \rfloor}{T+1}  - c > \frac{cT+c-\lambda -1}{T+1}  - c = \frac{-1  - \lambda} {T+1}
\end{align*}
So combined we have (for any $c \in (0, 1)$) 
\begin{align*}
    \left\lvert \Pr[H[X_*](Y) \leq c]  - c  \right\rvert \leq   \frac{1 + \lambda } {T+1}
\end{align*}
\end{proof}

\thmequivalence*

\begin{figure}
    \centering
    \includegraphics[width=0.7\linewidth]{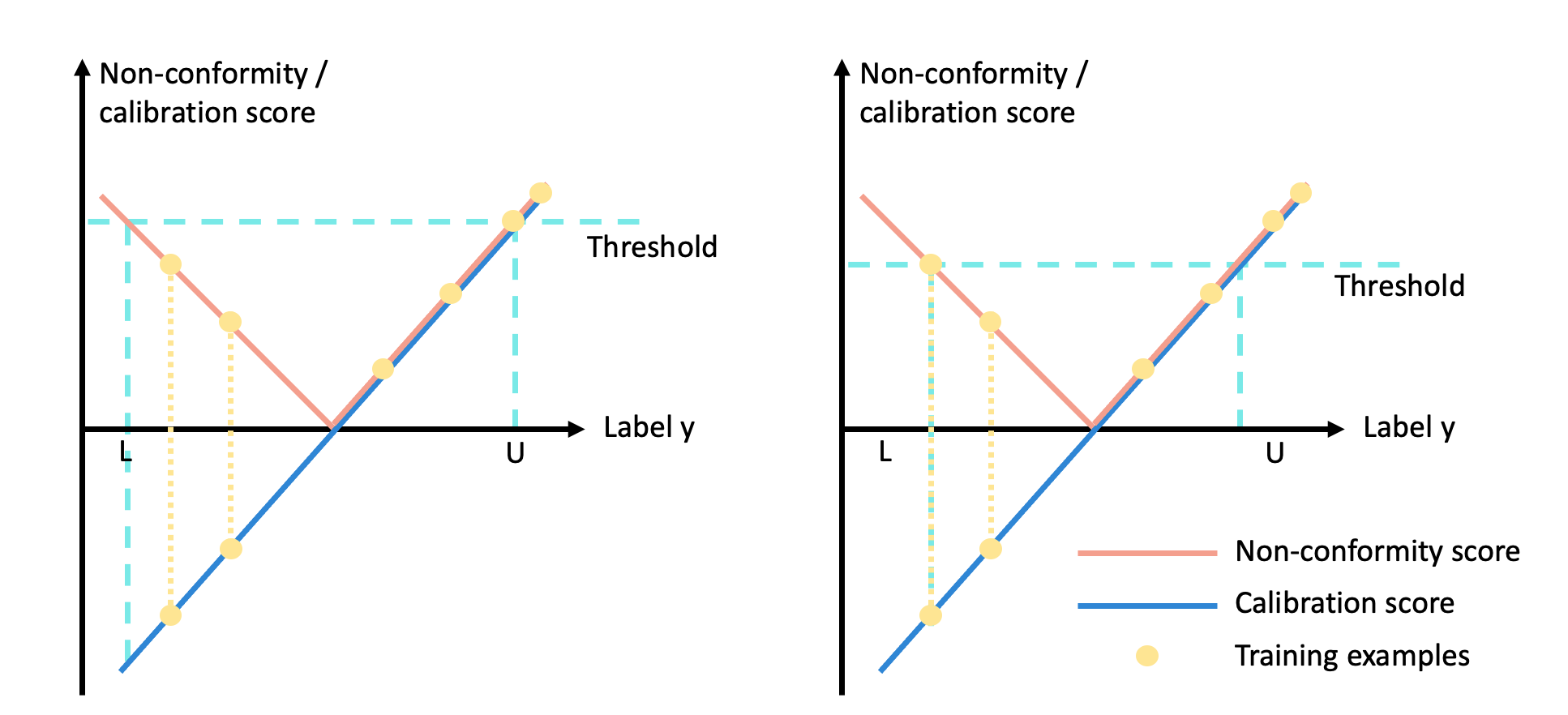}]
    \caption{Illustration of the proof of Theorem~\ref{thm:equivalence}}
    \label{fig:proof_illustration}
\end{figure}

\begin{proof}[Proof of Theorem~\ref{thm:equivalence}]
We will use a constructive proof. Because conformal prediction algorithm does not change if we add a constant to the non-conformity score, so without loss of generality, assume $0$ is a lower bound on $\confunc$. Denote $y_{\mathrm{min}}(\confunc; x)$ as a global minimizer of $\confunc(x, \cdot)$, i.e. 
\begin{align}
    \confunc(x, y_{\mathrm{min}}(\confunc; x)) \leq \confunc(x, y), \forall y \in \Yc
\end{align}
Define 
\begin{align*}
    \calfunc(x, y) = \left\lbrace \begin{array}{ll} -\confunc(x, y) & y \leq y_{\mathrm{min}}(\confunc; x) \\
    \confunc(x, y) & y > y_{\mathrm{min}}(\confunc; x) \end{array} \right.
\end{align*}
Based on this construction we have $\confunc(x, y) = |\calfunc(x, y)|$. In addition because $\confunc$ is uni-modal, $\calfunc$ is monotonically non-decreasing, so $\calfunc$ satisfies the condition as a calibration score. As a notation convenience we will also denote $\calfunc(x, y) = \calfunc_x(y)$. %

Consider the conformal interval predictor (for notation convenience we will drop its dependence on $Z_1, \cdots, Z_T, X^*$)
\begin{align*}
    \finterval := (L, U) &= \left\lbrace y \mid \frac{1}T \#  \lbrace t \mid  |\calfunc_{X_t}(Y_t)| \leq |\calfunc_{X^*}(y)| \rbrace \leq c \right\rbrace \\
\end{align*}
First of all, observe that because $A$ is continuous, we must $|\calfunc_{X^*}(U)| = |\calfunc_{X^*}(L)|$. This intuition is illustrated in Figure~\ref{fig:proof_illustration}. Therefore 
\begin{align*}
     \lbrace \# t \mid  |\calfunc_{X_t}(Y_t)| \leq |\calfunc_{X^*}(L)| \rbrace &= \lbrace \# t \mid  |\calfunc_{X_t}(Y_t)| \leq |\calfunc_{X^*}(U)| \rbrace  \\
     &= \lbrace \# t \mid \calfunc_{X^*}(L) \leq \calfunc_{X_t}(Y_t) \leq \calfunc_{X^*}(U) \rbrace 
\end{align*}

Second we wish to prove that 
\begin{align*} 
 cT    \leq \# \lbrace t \mid |\calfunc_{X_t}(Y_t)| \leq |\calfunc_{X^*}(U)| \rbrace \leq  cT  + 1
 \end{align*} 
This is because if $\# \lbrace t \mid |\calfunc_{X_t}(Y_t)| \leq |\calfunc_{X^*}(U)| \rbrace < cT$ then because of continuity of $\confunc$, almost surely choosing $U' = U + \kappa$ for sufficiently small $\kappa > 0$ still satisfies $\# \lbrace t \mid |\calfunc_{X_t}(Y_t)| \leq |\calfunc_{X^*}(U')| \rbrace < cT$, therefore $U' \in (L, U)$ but $U' > U$, which is a contradiction. 
 
 If on the other hand, $\# \lbrace t \mid |\calfunc_{X_t}(Y_t)| \leq |\calfunc_{X^*}(U)| \rbrace > cT+1$, then let $U' = U - \kappa$ for sufficiently small $\kappa > 0$ we have $\# \lbrace t \mid |\calfunc_{X_t}(Y_t)| \leq |\calfunc_{X^*}(U)| \rbrace > cT$. This means that $U' \not\in (L, U)$ but $U' < U$, which is a contradiction. 
 
We observe that there are two possibilities, these two situations are illustrated in Figure~\ref{fig:proof_illustration}: situation 1. there exists a $t$ such that $\calfunc_{X_t}(Y_t) = \calfunc_{X^*}(U)$; situation 2. there exists a $t$ such that $\calfunc_{X_t}(Y_t) = \calfunc_{X^*}(L)$.

We first consider situation 1. Denote $D = \# \lbrace t, \calfunc_{X^*}(L) < \calfunc_{X_t}(Y_t) \leq \calfunc_{X^*}(U) \rbrace $ and $B = \# \lbrace t, \calfunc_{X_t}(Y_t) \leq \calfunc_{X^*}(L) \rbrace$. We know that $cT \leq D < cT+1$. Then by the assumption that the interpolation algorithm is $0$-exact we have
\begin{align*}
    H[X_*](U) = \frac{D+B+1}{T+1}, H[X](L) \in \left[\frac{B+1}{T+1}, \frac{B+2}{T+1} \right) 
\end{align*}
So their difference is bounded by 
\begin{align*}
    \frac{D-1}{T+1} &< H[X_*](U) - H[X_*](L)  \leq \frac{D}{T+1}  \numberthis\label{eq:bound_diff1}  \\
    \frac{cT-1}{T+1} &< H[X_*](U) - H[X_*](L)  < \frac{cT+1}{T+1}
\end{align*}
Therefore 
\begin{align*}
    H[X_*](U) - H[X_*](L) - c &\leq \frac{cT+1}{T+1} - c = \frac{1-c}{T+1} \\
    c - H[X_*](U) - H[X_*](L) &\geq c - \frac{cT-1}{T+1} = \frac{c-1}{T+1} 
\end{align*}
Combined we have 
\begin{align*}
    H[X](U) - H[X](L) \in c \pm \frac{|1-c|}{T+1}
\end{align*} 

Now we consider situation 2. Denote $D' = \# \lbrace t, \calfunc_{X^*}(L) \leq \calfunc_{X_t}(Y_t) < \calfunc_{X^*}(U) \rbrace $ and $B' = \# \lbrace t, \calfunc_{X_t}(Y_t) < \calfunc_{X^*}(L) \rbrace$. Again we know that $cT \leq D' < cT+1$. Then by the assumption that the interpolation algorithm is $0$-exact we have
\begin{align*}
    H[X_*](U) \in \left[\frac{D'+B'+1}{T+1}, \frac{D'+B'+2}{T+1}\right), H[X_*](L) = \frac{B'+2}{T+1} 
\end{align*}
So their difference is bounded by 
\begin{align*}
    \frac{D'-1}{T+1} &< H[X_*](U) - H[X_*](L)  \leq \frac{D'}{T+1} \numberthis\label{eq:bound_diff2} 
\end{align*}
This is identical to Eq.(\ref{eq:bound_diff1}) so the rest of the proof will follow identically. 
\end{proof} 

\section{Additional Experimental Results}
\label{app:full_experiments}
\begin{table}[]
    \centering
    {\scriptsize \begin{tabular}{lllllll}
\toprule
                         &                        &                             \textbf{\textbf{STD}} &                   \textbf{\textbf{95\% CI Width}} &                              \textbf{\textbf{NLL}} &                            \textbf{\textbf{CRPS}} &                             \textbf{\textbf{ECE}} \\
\midrule
\texttt{blog} & \textsc{zscore-NAF} &  $\cellcolor{green!25}{\mathbf{0.553 \pm 0.006}}$ &                                 $2.737 \pm 0.151$ &                                  $0.178 \pm 0.039$ &  $\cellcolor{green!25}{\mathbf{0.287 \pm 0.003}}$ &                                 $0.002 \pm 0.001$ \\
                         & \textsc{cdf-linear} &                                 $1.380 \pm 0.513$ &                                 $3.074 \pm 0.475$ &   $\cellcolor{green!25}{\mathbf{0.064 \pm 0.062}}$ &                                 $0.289 \pm 0.002$ &  $\cellcolor{green!25}{\mathbf{0.001 \pm 0.001}}$ \\
                         & \textsc{cdf-NAF} &                                 $0.592 \pm 0.061$ &  $\cellcolor{green!25}{\mathbf{2.027 \pm 0.201}}$ &                                  $0.944 \pm 1.229$ &                                 $0.328 \pm 0.049$ &                                 $0.039 \pm 0.028$ \\
                         & \textsc{zscore-random} &                                 $0.567 \pm 0.007$ &                                 $2.657 \pm 0.101$ &                                  $4.087 \pm 0.102$ &                                 $0.288 \pm 0.003$ &                                 $0.002 \pm 0.001$ \\
                         & \textsc{cdf-random} &                                 $1.381 \pm 0.513$ &                                 $3.074 \pm 0.475$ &                                  $4.069 \pm 0.096$ &                                 $0.289 \pm 0.002$ &  $\cellcolor{green!25}{\mathbf{0.001 \pm 0.001}}$ \\
                         & \textsc{zscore-linear} &                                 $0.567 \pm 0.007$ &                                 $2.657 \pm 0.101$ &                                  $0.321 \pm 0.040$ &                                 $0.288 \pm 0.003$ &                                 $0.002 \pm 0.001$ \\
\midrule \texttt{boston} & \textsc{zscore-NAF} &                                 $0.337 \pm 0.019$ &                                 $1.589 \pm 0.195$ &                                  $0.467 \pm 0.117$ &  $\cellcolor{green!25}{\mathbf{0.173 \pm 0.012}}$ &  $\cellcolor{green!25}{\mathbf{0.009 \pm 0.010}}$ \\
                         & \textsc{cdf-linear} &                                 $0.372 \pm 0.031$ &                                 $1.527 \pm 0.118$ &                                  $0.642 \pm 0.075$ &  $\cellcolor{green!25}{\mathbf{0.173 \pm 0.012}}$ &  $\cellcolor{green!25}{\mathbf{0.009 \pm 0.009}}$ \\
                         & \textsc{cdf-NAF} &                                 $0.357 \pm 0.025$ &                                 $1.539 \pm 0.124$ &   $\cellcolor{green!25}{\mathbf{0.363 \pm 0.070}}$ &                                 $0.174 \pm 0.012$ &  $\cellcolor{green!25}{\mathbf{0.009 \pm 0.010}}$ \\
                         & \textsc{zscore-random} &                                 $0.335 \pm 0.018$ &                                 $1.520 \pm 0.121$ &                                 $13.622 \pm 0.094$ &  $\cellcolor{green!25}{\mathbf{0.173 \pm 0.012}}$ &  $\cellcolor{green!25}{\mathbf{0.009 \pm 0.009}}$ \\
                         & \textsc{cdf-random} &                                               N/A &                                 $1.532 \pm 0.121$ &                                 $13.588 \pm 0.114$ &                                               N/A &  $\cellcolor{green!25}{\mathbf{0.009 \pm 0.009}}$ \\
                         & \textsc{zscore-linear} &  $\cellcolor{green!25}{\mathbf{0.331 \pm 0.017}}$ &  $\cellcolor{green!25}{\mathbf{1.449 \pm 0.111}}$ &                                  $0.674 \pm 0.068$ &  $\cellcolor{green!25}{\mathbf{0.173 \pm 0.012}}$ &  $\cellcolor{green!25}{\mathbf{0.009 \pm 0.009}}$ \\
\midrule \texttt{concrete} & \textsc{zscore-NAF} &                                 $0.325 \pm 0.017$ &                                 $1.402 \pm 0.186$ &                                  $0.325 \pm 0.066$ &  $\cellcolor{green!25}{\mathbf{0.163 \pm 0.007}}$ &  $\cellcolor{green!25}{\mathbf{0.005 \pm 0.005}}$ \\
                         & \textsc{cdf-linear} &                                 $0.317 \pm 0.016$ &                                 $1.287 \pm 0.086$ &                                  $0.576 \pm 0.054$ &  $\cellcolor{green!25}{\mathbf{0.163 \pm 0.007}}$ &  $\cellcolor{green!25}{\mathbf{0.005 \pm 0.005}}$ \\
                         & \textsc{cdf-NAF} &                                 $0.312 \pm 0.016$ &                                 $1.354 \pm 0.078$ &   $\cellcolor{green!25}{\mathbf{0.251 \pm 0.051}}$ &  $\cellcolor{green!25}{\mathbf{0.163 \pm 0.007}}$ &  $\cellcolor{green!25}{\mathbf{0.005 \pm 0.006}}$ \\
                         & \textsc{zscore-random} &                                 $0.310 \pm 0.014$ &                                 $1.287 \pm 0.087$ &                                 $13.206 \pm 0.090$ &  $\cellcolor{green!25}{\mathbf{0.163 \pm 0.007}}$ &  $\cellcolor{green!25}{\mathbf{0.005 \pm 0.005}}$ \\
                         & \textsc{cdf-random} &                                               N/A &                                 $1.288 \pm 0.086$ &                                 $13.212 \pm 0.100$ &                                               N/A &  $\cellcolor{green!25}{\mathbf{0.005 \pm 0.005}}$ \\
                         & \textsc{zscore-linear} &  $\cellcolor{green!25}{\mathbf{0.309 \pm 0.014}}$ &  $\cellcolor{green!25}{\mathbf{1.279 \pm 0.084}}$ &                                  $0.593 \pm 0.057$ &  $\cellcolor{green!25}{\mathbf{0.163 \pm 0.007}}$ &  $\cellcolor{green!25}{\mathbf{0.005 \pm 0.005}}$ \\
\midrule \texttt{crime} & \textsc{zscore-NAF} &                                 $0.498 \pm 0.013$ &                                 $2.062 \pm 0.084$ &                                  $1.764 \pm 0.123$ &                                 $0.309 \pm 0.008$ &                                 $0.010 \pm 0.006$ \\
                         & \textsc{cdf-linear} &                                 $0.644 \pm 0.029$ &                                 $2.406 \pm 0.066$ &                                  $1.077 \pm 0.057$ &  $\cellcolor{green!25}{\mathbf{0.308 \pm 0.007}}$ &  $\cellcolor{green!25}{\mathbf{0.006 \pm 0.006}}$ \\
                         & \textsc{cdf-NAF} &                                 $0.570 \pm 0.014$ &                                 $2.161 \pm 0.053$ &   $\cellcolor{green!25}{\mathbf{0.814 \pm 0.067}}$ &                                 $0.311 \pm 0.007$ &  $\cellcolor{green!25}{\mathbf{0.006 \pm 0.006}}$ \\
                         & \textsc{zscore-random} &  $\cellcolor{green!25}{\mathbf{0.487 \pm 0.011}}$ &  $\cellcolor{green!25}{\mathbf{1.900 \pm 0.036}}$ &                                 $13.308 \pm 0.071$ &  $\cellcolor{green!25}{\mathbf{0.308 \pm 0.008}}$ &  $\cellcolor{green!25}{\mathbf{0.006 \pm 0.006}}$ \\
                         & \textsc{cdf-random} &                                               N/A &                                 $2.406 \pm 0.066$ &                                 $13.274 \pm 0.054$ &                                               N/A &  $\cellcolor{green!25}{\mathbf{0.006 \pm 0.006}}$ \\
                         & \textsc{zscore-linear} &  $\cellcolor{green!25}{\mathbf{0.487 \pm 0.011}}$ &  $\cellcolor{green!25}{\mathbf{1.900 \pm 0.036}}$ &                                  $1.901 \pm 0.130$ &  $\cellcolor{green!25}{\mathbf{0.308 \pm 0.008}}$ &  $\cellcolor{green!25}{\mathbf{0.006 \pm 0.006}}$ \\
\midrule \texttt{energy} & \textsc{zscore-NAF} &                                 $0.174 \pm 0.013$ &  $\cellcolor{green!25}{\mathbf{0.652 \pm 0.043}}$ &  $\cellcolor{green!25}{\mathbf{-0.394 \pm 0.097}}$ &  $\cellcolor{green!25}{\mathbf{0.099 \pm 0.008}}$ &  $\cellcolor{green!25}{\mathbf{0.010 \pm 0.006}}$ \\
 \texttt{-efficiency}                        & \textsc{cdf-linear} &  $\cellcolor{green!25}{\mathbf{0.168 \pm 0.010}}$ &                                 $0.668 \pm 0.040$ &                                  $0.027 \pm 0.108$ &  $\cellcolor{green!25}{\mathbf{0.099 \pm 0.008}}$ &  $\cellcolor{green!25}{\mathbf{0.010 \pm 0.006}}$ \\
                         & \textsc{cdf-NAF} &  $\cellcolor{green!25}{\mathbf{0.168 \pm 0.010}}$ &                                 $0.681 \pm 0.039$ &                                 $-0.342 \pm 0.101$ &  $\cellcolor{green!25}{\mathbf{0.099 \pm 0.008}}$ &  $\cellcolor{green!25}{\mathbf{0.010 \pm 0.006}}$ \\
                         & \textsc{zscore-random} &                                 $0.180 \pm 0.010$ &                                 $0.667 \pm 0.040$ &                                 $13.253 \pm 0.130$ &  $\cellcolor{green!25}{\mathbf{0.099 \pm 0.008}}$ &  $\cellcolor{green!25}{\mathbf{0.010 \pm 0.006}}$ \\
                         & \textsc{cdf-random} &                                               N/A &                                 $0.668 \pm 0.040$ &                                 $13.252 \pm 0.125$ &                                               N/A &  $\cellcolor{green!25}{\mathbf{0.010 \pm 0.006}}$ \\
                         & \textsc{zscore-linear} &                                 $0.172 \pm 0.011$ &                                 $0.666 \pm 0.040$ &                                  $0.003 \pm 0.107$ &  $\cellcolor{green!25}{\mathbf{0.099 \pm 0.008}}$ &  $\cellcolor{green!25}{\mathbf{0.010 \pm 0.006}}$ \\
\midrule \texttt{fb-comment1} & \textsc{zscore-NAF} &                                 $0.376 \pm 0.009$ &                                 $1.459 \pm 0.028$ &   $\cellcolor{green!25}{\mathbf{0.066 \pm 0.015}}$ &  $\cellcolor{green!25}{\mathbf{0.210 \pm 0.001}}$ &  $\cellcolor{green!25}{\mathbf{0.002 \pm 0.001}}$ \\
                         & \textsc{cdf-linear} &                                 $0.880 \pm 0.119$ &                                 $1.692 \pm 0.034$ &                                  $0.120 \pm 0.017$ &  $\cellcolor{green!25}{\mathbf{0.210 \pm 0.001}}$ &  $\cellcolor{green!25}{\mathbf{0.002 \pm 0.001}}$ \\
                         & \textsc{cdf-NAF} &  $\cellcolor{green!25}{\mathbf{0.344 \pm 0.054}}$ &  $\cellcolor{green!25}{\mathbf{1.156 \pm 0.195}}$ &                                  $2.841 \pm 2.106$ &                                 $0.288 \pm 0.075$ &                                 $0.091 \pm 0.046$ \\
                         & \textsc{zscore-random} &                                 $0.397 \pm 0.004$ &                                 $1.692 \pm 0.034$ &                                  $5.196 \pm 0.030$ &  $\cellcolor{green!25}{\mathbf{0.210 \pm 0.001}}$ &  $\cellcolor{green!25}{\mathbf{0.002 \pm 0.001}}$ \\
                         & \textsc{cdf-random} &                                 $0.880 \pm 0.119$ &                                 $1.692 \pm 0.034$ &                                  $5.220 \pm 0.029$ &  $\cellcolor{green!25}{\mathbf{0.210 \pm 0.001}}$ &  $\cellcolor{green!25}{\mathbf{0.002 \pm 0.001}}$ \\
                         & \textsc{zscore-linear} &                                 $0.397 \pm 0.004$ &                                 $1.692 \pm 0.034$ &                                  $0.261 \pm 0.015$ &  $\cellcolor{green!25}{\mathbf{0.210 \pm 0.001}}$ &  $\cellcolor{green!25}{\mathbf{0.002 \pm 0.001}}$ \\
\midrule \texttt{fb-comment2} & \textsc{zscore-NAF} &                                 $0.368 \pm 0.010$ &                                 $1.423 \pm 0.021$ &   $\cellcolor{green!25}{\mathbf{0.053 \pm 0.025}}$ &  $\cellcolor{green!25}{\mathbf{0.207 \pm 0.002}}$ &  $\cellcolor{green!25}{\mathbf{0.001 \pm 0.001}}$ \\
                         & \textsc{cdf-linear} &                                 $0.459 \pm 0.017$ &                                 $1.573 \pm 0.016$ &                                  $0.118 \pm 0.027$ &  $\cellcolor{green!25}{\mathbf{0.207 \pm 0.002}}$ &  $\cellcolor{green!25}{\mathbf{0.001 \pm 0.001}}$ \\
                         & \textsc{cdf-NAF} &  $\cellcolor{green!25}{\mathbf{0.347 \pm 0.035}}$ &  $\cellcolor{green!25}{\mathbf{1.294 \pm 0.171}}$ &                                                N/A &                                 $0.273 \pm 0.083$ &                                               N/A \\
                         & \textsc{zscore-random} &                                 $0.386 \pm 0.004$ &                                 $1.573 \pm 0.016$ &                                  $3.739 \pm 0.031$ &  $\cellcolor{green!25}{\mathbf{0.207 \pm 0.002}}$ &  $\cellcolor{green!25}{\mathbf{0.001 \pm 0.001}}$ \\
                         & \textsc{cdf-random} &                                 $0.459 \pm 0.017$ &                                 $1.573 \pm 0.016$ &                                  $3.731 \pm 0.037$ &  $\cellcolor{green!25}{\mathbf{0.207 \pm 0.002}}$ &  $\cellcolor{green!25}{\mathbf{0.001 \pm 0.001}}$ \\
                         & \textsc{zscore-linear} &                                 $0.386 \pm 0.004$ &                                 $1.573 \pm 0.016$ &                                  $0.194 \pm 0.023$ &  $\cellcolor{green!25}{\mathbf{0.207 \pm 0.002}}$ &  $\cellcolor{green!25}{\mathbf{0.001 \pm 0.001}}$ \\
\midrule \texttt{forest-fires} & \textsc{zscore-NAF} &                                 $1.167 \pm 0.043$ &                                 $4.697 \pm 0.214$ &                                  $1.959 \pm 0.137$ &  $\cellcolor{green!25}{\mathbf{0.601 \pm 0.023}}$ &                                 $0.018 \pm 0.008$ \\
                         & \textsc{cdf-linear} &                                 $1.285 \pm 0.162$ &                                 $4.874 \pm 0.380$ &                                  $1.964 \pm 0.086$ &  $\cellcolor{green!25}{\mathbf{0.601 \pm 0.022}}$ &  $\cellcolor{green!25}{\mathbf{0.017 \pm 0.008}}$ \\
                         & \textsc{cdf-NAF} &                                 $1.219 \pm 0.078$ &                                 $4.801 \pm 0.339$ &   $\cellcolor{green!25}{\mathbf{1.637 \pm 0.087}}$ &                                 $0.607 \pm 0.023$ &  $\cellcolor{green!25}{\mathbf{0.017 \pm 0.008}}$ \\
                         & \textsc{zscore-random} &                                 $1.156 \pm 0.041$ &                                 $4.529 \pm 0.185$ &                                 $13.648 \pm 0.102$ &  $\cellcolor{green!25}{\mathbf{0.601 \pm 0.023}}$ &  $\cellcolor{green!25}{\mathbf{0.017 \pm 0.008}}$ \\
                         & \textsc{cdf-random} &                                               N/A &                                 $4.862 \pm 0.381$ &                                 $13.659 \pm 0.099$ &                                               N/A &  $\cellcolor{green!25}{\mathbf{0.017 \pm 0.008}}$ \\
                         & \textsc{zscore-linear} &  $\cellcolor{green!25}{\mathbf{1.147 \pm 0.043}}$ &  $\cellcolor{green!25}{\mathbf{4.455 \pm 0.219}}$ &                                  $2.121 \pm 0.124$ &  $\cellcolor{green!25}{\mathbf{0.601 \pm 0.023}}$ &  $\cellcolor{green!25}{\mathbf{0.017 \pm 0.008}}$ \\
\midrule \texttt{kin8nm} & \textsc{zscore-NAF} &                                 $0.300 \pm 0.006$ &  $\cellcolor{green!25}{\mathbf{1.107 \pm 0.016}}$ &                                  $0.127 \pm 0.012$ &  $\cellcolor{green!25}{\mathbf{0.152 \pm 0.002}}$ &  $\cellcolor{green!25}{\mathbf{0.003 \pm 0.002}}$ \\
                         & \textsc{cdf-linear} &  $\cellcolor{green!25}{\mathbf{0.281 \pm 0.003}}$ &                                 $1.121 \pm 0.013$ &                                  $0.490 \pm 0.015$ &  $\cellcolor{green!25}{\mathbf{0.152 \pm 0.002}}$ &  $\cellcolor{green!25}{\mathbf{0.003 \pm 0.002}}$ \\
                         & \textsc{cdf-NAF} &  $\cellcolor{green!25}{\mathbf{0.281 \pm 0.003}}$ &                                 $1.149 \pm 0.015$ &   $\cellcolor{green!25}{\mathbf{0.112 \pm 0.014}}$ &  $\cellcolor{green!25}{\mathbf{0.152 \pm 0.002}}$ &  $\cellcolor{green!25}{\mathbf{0.003 \pm 0.002}}$ \\
                         & \textsc{zscore-random} &  $\cellcolor{green!25}{\mathbf{0.281 \pm 0.003}}$ &                                 $1.121 \pm 0.013$ &                                 $11.279 \pm 0.090$ &  $\cellcolor{green!25}{\mathbf{0.152 \pm 0.002}}$ &  $\cellcolor{green!25}{\mathbf{0.003 \pm 0.002}}$ \\
                         & \textsc{cdf-random} &  $\cellcolor{green!25}{\mathbf{0.281 \pm 0.003}}$ &                                 $1.121 \pm 0.013$ &                                 $11.291 \pm 0.073$ &  $\cellcolor{green!25}{\mathbf{0.152 \pm 0.002}}$ &  $\cellcolor{green!25}{\mathbf{0.003 \pm 0.002}}$ \\
                         & \textsc{zscore-linear} &  $\cellcolor{green!25}{\mathbf{0.281 \pm 0.003}}$ &                                 $1.121 \pm 0.013$ &                                  $0.486 \pm 0.014$ &  $\cellcolor{green!25}{\mathbf{0.152 \pm 0.002}}$ &  $\cellcolor{green!25}{\mathbf{0.003 \pm 0.002}}$ \\
\bottomrule
\end{tabular}}
    \caption{Experimental results for individual datasets.}        \label{tab:my_label}
\end{table}

\begin{table}[]
    \centering
    {\scriptsize \begin{tabular}{lllllll}
\toprule
                        &                        &                             \textbf{\textbf{STD}} &                   \textbf{\textbf{95\% CI Width}} &                              \textbf{\textbf{NLL}} &                            \textbf{\textbf{CRPS}} &                             \textbf{\textbf{ECE}} \\
\midrule
 \texttt{medical} & \textsc{zscore-NAF} &                                 $0.935 \pm 0.008$ &                                 $4.465 \pm 0.064$ &                                  $1.548 \pm 0.015$ &                                 $0.463 \pm 0.002$ &                                 $0.003 \pm 0.001$ \\
\texttt{-expenditure}                        & \textsc{cdf-linear} &                                 $0.963 \pm 0.009$ &                                 $3.648 \pm 0.037$ &                                  $1.643 \pm 0.009$ &  $\cellcolor{green!25}{\mathbf{0.462 \pm 0.002}}$ &  $\cellcolor{green!25}{\mathbf{0.002 \pm 0.001}}$ \\
                        & \textsc{cdf-NAF} &                                 $0.928 \pm 0.010$ &  $\cellcolor{green!25}{\mathbf{3.616 \pm 0.110}}$ &   $\cellcolor{green!25}{\mathbf{1.381 \pm 0.057}}$ &                                 $0.465 \pm 0.002$ &                                 $0.005 \pm 0.004$ \\
                        & \textsc{zscore-random} &  $\cellcolor{green!25}{\mathbf{0.893 \pm 0.004}}$ &                                 $3.648 \pm 0.037$ &                                 $10.869 \pm 0.041$ &  $\cellcolor{green!25}{\mathbf{0.462 \pm 0.002}}$ &  $\cellcolor{green!25}{\mathbf{0.002 \pm 0.001}}$ \\
                        & \textsc{cdf-random} &                                 $0.963 \pm 0.009$ &                                 $3.648 \pm 0.037$ &                                 $10.866 \pm 0.034$ &  $\cellcolor{green!25}{\mathbf{0.462 \pm 0.002}}$ &  $\cellcolor{green!25}{\mathbf{0.002 \pm 0.001}}$ \\
                        & \textsc{zscore-linear} &  $\cellcolor{green!25}{\mathbf{0.893 \pm 0.004}}$ &                                 $3.648 \pm 0.037$ &                                  $1.818 \pm 0.015$ &  $\cellcolor{green!25}{\mathbf{0.462 \pm 0.002}}$ &  $\cellcolor{green!25}{\mathbf{0.002 \pm 0.001}}$ \\
\midrule \texttt{mpg} & \textsc{zscore-NAF} &                                 $0.396 \pm 0.021$ &                                 $1.832 \pm 0.186$ &                                  $0.555 \pm 0.114$ &  $\cellcolor{green!25}{\mathbf{0.186 \pm 0.011}}$ &  $\cellcolor{green!25}{\mathbf{0.019 \pm 0.013}}$ \\
                        & \textsc{cdf-linear} &                                 $0.392 \pm 0.030$ &                                 $1.554 \pm 0.110$ &                                  $0.821 \pm 0.099$ &  $\cellcolor{green!25}{\mathbf{0.186 \pm 0.011}}$ &                                 $0.020 \pm 0.013$ \\
                        & \textsc{cdf-NAF} &                                 $0.383 \pm 0.026$ &                                 $1.652 \pm 0.138$ &   $\cellcolor{green!25}{\mathbf{0.551 \pm 0.084}}$ &                                 $0.187 \pm 0.011$ &  $\cellcolor{green!25}{\mathbf{0.019 \pm 0.013}}$ \\
                        & \textsc{zscore-random} &                                 $0.380 \pm 0.018$ &                                 $1.545 \pm 0.109$ &                                 $13.696 \pm 0.073$ &  $\cellcolor{green!25}{\mathbf{0.186 \pm 0.011}}$ &                                 $0.020 \pm 0.013$ \\
                        & \textsc{cdf-random} &                                               N/A &                                 $1.546 \pm 0.110$ &                                 $13.651 \pm 0.092$ &                                               N/A &                                 $0.020 \pm 0.013$ \\
                        & \textsc{zscore-linear} &  $\cellcolor{green!25}{\mathbf{0.377 \pm 0.018}}$ &  $\cellcolor{green!25}{\mathbf{1.538 \pm 0.108}}$ &                                  $0.756 \pm 0.096$ &  $\cellcolor{green!25}{\mathbf{0.186 \pm 0.011}}$ &  $\cellcolor{green!25}{\mathbf{0.019 \pm 0.013}}$ \\
\midrule \texttt{naval} & \textsc{zscore-NAF} &  $\cellcolor{green!25}{\mathbf{0.041 \pm 0.002}}$ &                                 $0.165 \pm 0.008$ &                                 $-1.904 \pm 0.036$ &  $\cellcolor{green!25}{\mathbf{0.025 \pm 0.001}}$ &  $\cellcolor{green!25}{\mathbf{0.002 \pm 0.001}}$ \\
                        & \textsc{cdf-linear} &                                 $0.042 \pm 0.002$ &  $\cellcolor{green!25}{\mathbf{0.162 \pm 0.007}}$ &                                 $-1.716 \pm 0.029$ &  $\cellcolor{green!25}{\mathbf{0.025 \pm 0.001}}$ &  $\cellcolor{green!25}{\mathbf{0.002 \pm 0.001}}$ \\
                        & \textsc{cdf-NAF} &                                 $0.042 \pm 0.002$ &                                 $0.173 \pm 0.008$ &  $\cellcolor{green!25}{\mathbf{-1.915 \pm 0.027}}$ &  $\cellcolor{green!25}{\mathbf{0.025 \pm 0.001}}$ &  $\cellcolor{green!25}{\mathbf{0.002 \pm 0.001}}$ \\
                        & \textsc{zscore-random} &                                 $0.042 \pm 0.002$ &  $\cellcolor{green!25}{\mathbf{0.162 \pm 0.007}}$ &                                  $1.355 \pm 0.125$ &  $\cellcolor{green!25}{\mathbf{0.025 \pm 0.001}}$ &  $\cellcolor{green!25}{\mathbf{0.002 \pm 0.001}}$ \\
                        & \textsc{cdf-random} &                                 $0.042 \pm 0.002$ &  $\cellcolor{green!25}{\mathbf{0.162 \pm 0.007}}$ &                                  $1.330 \pm 0.114$ &  $\cellcolor{green!25}{\mathbf{0.025 \pm 0.001}}$ &  $\cellcolor{green!25}{\mathbf{0.002 \pm 0.001}}$ \\
                        & \textsc{zscore-linear} &                                 $0.042 \pm 0.002$ &  $\cellcolor{green!25}{\mathbf{0.162 \pm 0.007}}$ &                                 $-1.717 \pm 0.029$ &  $\cellcolor{green!25}{\mathbf{0.025 \pm 0.001}}$ &  $\cellcolor{green!25}{\mathbf{0.002 \pm 0.001}}$ \\
\midrule \texttt{power-plant} & \textsc{zscore-NAF} &  $\cellcolor{green!25}{\mathbf{0.218 \pm 0.003}}$ &  $\cellcolor{green!25}{\mathbf{0.845 \pm 0.007}}$ &  $\cellcolor{green!25}{\mathbf{-0.104 \pm 0.010}}$ &  $\cellcolor{green!25}{\mathbf{0.120 \pm 0.001}}$ &  $\cellcolor{green!25}{\mathbf{0.003 \pm 0.002}}$ \\
                        & \textsc{cdf-linear} &                                 $0.225 \pm 0.003$ &                                 $0.854 \pm 0.009$ &                                  $0.244 \pm 0.011$ &  $\cellcolor{green!25}{\mathbf{0.120 \pm 0.001}}$ &  $\cellcolor{green!25}{\mathbf{0.003 \pm 0.002}}$ \\
                        & \textsc{cdf-NAF} &                                 $0.224 \pm 0.003$ &                                 $0.886 \pm 0.011$ &                                 $-0.101 \pm 0.010$ &                                 $0.121 \pm 0.001$ &  $\cellcolor{green!25}{\mathbf{0.003 \pm 0.002}}$ \\
                        & \textsc{zscore-random} &                                 $0.219 \pm 0.002$ &                                 $0.854 \pm 0.009$ &                                 $10.208 \pm 0.074$ &  $\cellcolor{green!25}{\mathbf{0.120 \pm 0.001}}$ &  $\cellcolor{green!25}{\mathbf{0.003 \pm 0.002}}$ \\
                        & \textsc{cdf-random} &                                 $0.225 \pm 0.003$ &                                 $0.854 \pm 0.009$ &                                 $10.208 \pm 0.067$ &  $\cellcolor{green!25}{\mathbf{0.120 \pm 0.001}}$ &  $\cellcolor{green!25}{\mathbf{0.003 \pm 0.002}}$ \\
                        & \textsc{zscore-linear} &  $\cellcolor{green!25}{\mathbf{0.218 \pm 0.002}}$ &                                 $0.854 \pm 0.009$ &                                  $0.251 \pm 0.011$ &  $\cellcolor{green!25}{\mathbf{0.120 \pm 0.001}}$ &  $\cellcolor{green!25}{\mathbf{0.003 \pm 0.002}}$ \\
\midrule \texttt{protein} & \textsc{zscore-NAF} &                                 $0.622 \pm 0.011$ &                                 $2.246 \pm 0.027$ &   $\cellcolor{green!25}{\mathbf{0.765 \pm 0.019}}$ &  $\cellcolor{green!25}{\mathbf{0.339 \pm 0.003}}$ &  $\cellcolor{green!25}{\mathbf{0.001 \pm 0.001}}$ \\
                        & \textsc{cdf-linear} &                                 $0.709 \pm 0.020$ &                                 $2.333 \pm 0.024$ &                                  $0.960 \pm 0.018$ &  $\cellcolor{green!25}{\mathbf{0.339 \pm 0.003}}$ &  $\cellcolor{green!25}{\mathbf{0.001 \pm 0.001}}$ \\
                        & \textsc{cdf-NAF} &  $\cellcolor{green!25}{\mathbf{0.617 \pm 0.007}}$ &  $\cellcolor{green!25}{\mathbf{2.138 \pm 0.072}}$ &                                  $0.968 \pm 0.048$ &                                 $0.341 \pm 0.003$ &                                 $0.015 \pm 0.006$ \\
                        & \textsc{zscore-random} &                                 $0.627 \pm 0.006$ &                                 $2.333 \pm 0.024$ &                                  $7.366 \pm 0.072$ &  $\cellcolor{green!25}{\mathbf{0.339 \pm 0.003}}$ &  $\cellcolor{green!25}{\mathbf{0.001 \pm 0.001}}$ \\
                        & \textsc{cdf-random} &                                 $0.709 \pm 0.020$ &                                 $2.333 \pm 0.024$ &                                  $7.393 \pm 0.073$ &  $\cellcolor{green!25}{\mathbf{0.339 \pm 0.003}}$ &  $\cellcolor{green!25}{\mathbf{0.001 \pm 0.001}}$ \\
                        & \textsc{zscore-linear} &                                 $0.627 \pm 0.006$ &                                 $2.333 \pm 0.024$ &                                  $0.995 \pm 0.017$ &  $\cellcolor{green!25}{\mathbf{0.339 \pm 0.003}}$ &  $\cellcolor{green!25}{\mathbf{0.001 \pm 0.001}}$ \\
\midrule \texttt{super} & \textsc{zscore-NAF} &                                 $0.307 \pm 0.005$ &                                 $1.140 \pm 0.016$ &  $\cellcolor{green!25}{\mathbf{-0.246 \pm 0.016}}$ &  $\cellcolor{green!25}{\mathbf{0.149 \pm 0.002}}$ &  $\cellcolor{green!25}{\mathbf{0.003 \pm 0.002}}$ \\
\texttt{-conductivity}                        & \textsc{cdf-linear} &                                 $0.343 \pm 0.017$ &                                 $1.179 \pm 0.016$ &                                 $-0.052 \pm 0.013$ &  $\cellcolor{green!25}{\mathbf{0.149 \pm 0.002}}$ &  $\cellcolor{green!25}{\mathbf{0.003 \pm 0.002}}$ \\
                        & \textsc{cdf-NAF} &  $\cellcolor{green!25}{\mathbf{0.294 \pm 0.005}}$ &  $\cellcolor{green!25}{\mathbf{1.089 \pm 0.061}}$ &                                  $0.046 \pm 0.074$ &                                 $0.153 \pm 0.002$ &                                 $0.019 \pm 0.008$ \\
                        & \textsc{zscore-random} &                                 $0.300 \pm 0.004$ &                                 $1.180 \pm 0.016$ &                                  $6.119 \pm 0.070$ &  $\cellcolor{green!25}{\mathbf{0.149 \pm 0.002}}$ &  $\cellcolor{green!25}{\mathbf{0.003 \pm 0.002}}$ \\
                        & \textsc{cdf-random} &                                 $0.343 \pm 0.017$ &                                 $1.179 \pm 0.016$ &                                  $6.126 \pm 0.066$ &  $\cellcolor{green!25}{\mathbf{0.149 \pm 0.002}}$ &  $\cellcolor{green!25}{\mathbf{0.003 \pm 0.002}}$ \\
                        & \textsc{zscore-linear} &                                 $0.300 \pm 0.004$ &                                 $1.180 \pm 0.016$ &                                 $-0.006 \pm 0.017$ &  $\cellcolor{green!25}{\mathbf{0.149 \pm 0.002}}$ &  $\cellcolor{green!25}{\mathbf{0.003 \pm 0.002}}$ \\
\midrule \texttt{wine} & \textsc{zscore-NAF} &                                 $0.830 \pm 0.021$ &                                 $3.794 \pm 0.323$ &                                  $1.376 \pm 0.066$ &  $\cellcolor{green!25}{\mathbf{0.427 \pm 0.012}}$ &  $\cellcolor{green!25}{\mathbf{0.013 \pm 0.007}}$ \\
                        & \textsc{cdf-linear} &                                 $1.031 \pm 0.258$ &                                 $3.256 \pm 0.135$ &                                  $1.513 \pm 0.046$ &  $\cellcolor{green!25}{\mathbf{0.427 \pm 0.012}}$ &  $\cellcolor{green!25}{\mathbf{0.013 \pm 0.007}}$ \\
                        & \textsc{cdf-NAF} &                                 $0.844 \pm 0.038$ &                                 $3.308 \pm 0.181$ &   $\cellcolor{green!25}{\mathbf{1.341 \pm 0.185}}$ &                                 $0.433 \pm 0.011$ &                                 $0.015 \pm 0.007$ \\
                        & \textsc{zscore-random} &                                 $0.791 \pm 0.020$ &                                 $3.256 \pm 0.136$ &                                 $12.452 \pm 0.122$ &  $\cellcolor{green!25}{\mathbf{0.427 \pm 0.012}}$ &  $\cellcolor{green!25}{\mathbf{0.013 \pm 0.007}}$ \\
                        & \textsc{cdf-random} &                                               N/A &                                 $3.256 \pm 0.136$ &                                 $12.436 \pm 0.101$ &                                               N/A &  $\cellcolor{green!25}{\mathbf{0.013 \pm 0.007}}$ \\
                        & \textsc{zscore-linear} &  $\cellcolor{green!25}{\mathbf{0.789 \pm 0.020}}$ &  $\cellcolor{green!25}{\mathbf{3.253 \pm 0.132}}$ &                                  $1.589 \pm 0.055$ &  $\cellcolor{green!25}{\mathbf{0.427 \pm 0.012}}$ &  $\cellcolor{green!25}{\mathbf{0.013 \pm 0.007}}$ \\
\midrule \texttt{yacht} & \textsc{zscore-NAF} &  $\cellcolor{green!25}{\mathbf{0.066 \pm 0.007}}$ &  $\cellcolor{green!25}{\mathbf{0.249 \pm 0.028}}$ &  $\cellcolor{green!25}{\mathbf{-1.484 \pm 0.180}}$ &  $\cellcolor{green!25}{\mathbf{0.042 \pm 0.007}}$ &  $\cellcolor{green!25}{\mathbf{0.015 \pm 0.011}}$ \\
                        & \textsc{cdf-linear} &  $\cellcolor{green!25}{\mathbf{0.066 \pm 0.007}}$ &                                 $0.266 \pm 0.040$ &                                 $-1.134 \pm 0.171$ &  $\cellcolor{green!25}{\mathbf{0.042 \pm 0.007}}$ &                                 $0.016 \pm 0.011$ \\
                        & \textsc{cdf-NAF} &  $\cellcolor{green!25}{\mathbf{0.066 \pm 0.007}}$ &                                 $0.260 \pm 0.037$ &                                 $-1.470 \pm 0.178$ &  $\cellcolor{green!25}{\mathbf{0.042 \pm 0.007}}$ &  $\cellcolor{green!25}{\mathbf{0.015 \pm 0.011}}$ \\
                        & \textsc{zscore-random} &                                 $0.098 \pm 0.006$ &                                 $0.266 \pm 0.039$ &                                 $12.958 \pm 0.316$ &  $\cellcolor{green!25}{\mathbf{0.042 \pm 0.007}}$ &                                 $0.016 \pm 0.011$ \\
                        & \textsc{cdf-random} &                                               N/A &                                 $0.267 \pm 0.041$ &                                 $12.996 \pm 0.323$ &                                               N/A &                                 $0.016 \pm 0.011$ \\
                        & \textsc{zscore-linear} &                                 $0.068 \pm 0.009$ &                                 $0.259 \pm 0.037$ &                                 $-1.163 \pm 0.176$ &  $\cellcolor{green!25}{\mathbf{0.042 \pm 0.007}}$ &                                 $0.016 \pm 0.011$ \\
\bottomrule
\end{tabular}}
    \caption{Experimental results for individual datasets.}    
    \label{tab:my_label}
\end{table}

\end{document}